\documentclass[journal]{IEEEtran}

\usepackage{xcolor,soul,framed} 

\colorlet{shadecolor}{yellow}
\usepackage[pdftex]{graphicx}
\graphicspath{{../pdf/}{../jpeg/}}
\DeclareGraphicsExtensions{.pdf,.jpeg,.png}

\usepackage[cmex10]{amsmath}
\usepackage{array}
\usepackage{mdwmath}
\usepackage{mdwtab}
\usepackage{eqparbox}
\usepackage{url}
\usepackage{amsthm}
\newtheorem{definition}{Definition}
\newtheorem{theorem}{Theorem}
\newtheorem{lemma}{Lemma}
\usepackage{amssymb}
\usepackage{amsmath, amssymb}
\usepackage{algorithm}
\usepackage{algpseudocode}
\usepackage{amsmath}
\usepackage{float}
\usepackage{multirow}
\usepackage{cite}
\usepackage{ragged2e}
\usepackage{xcolor}  
\usepackage[
    colorlinks=true,        
    linkcolor=red,          
    citecolor=blue,         
    urlcolor=blue,         
    filecolor=magenta       
]{hyperref}
\usepackage{soul}
\sethlcolor{yellow}
\usepackage{upgreek}
\usepackage{textcomp}

\begin{document}
\bstctlcite{IEEEexample:BSTcontrol}
    \title{M\textsuperscript{3}GCLR: Multi-View Mini-Max Infinite Skeleton-Data Game Contrastive Learning For Skeleton-Based Action Recognition}
  \author{Yanshan Li,
      Ke Ma,
      Miaomiao Wei,
      Linhui Dai$^{*}$

  \thanks{This work was partially supported by National Natural Science Foundation of China (No. 62471317), Natural Science Foundation of Shenzhen (No. JCYJ20240813141331042), and Guangdong Provincial Key Laboratory (Grant 2023B1212060076). }

  \thanks{Yanshan Li, Ke Ma, Miaomiao Wei, and Linhui Dai$^{*}$ are with Shenzhen University, the Institute of Intelligence Information Processing, and the Guangdong Key Laboratory of Intelligent Information Processing, Shenzhen 518000, China (emails: lys@szu.edu.cn; 2400042033@mails.szu.edu.cn; 1054772440@qq.com; dailinhui@szu.edu.cn$^{*}$).}
  }

\markboth{IEEE TRANSACTIONS ON PATTERN ANALYSIS AND MACHINE INTELLIGENCE, VOL.~xxx, NO.~xxx, xxx
}{Roberg \MakeLowercase{\textit{et al.}}: High-Efficiency Diode and Transistor Rectifiers}

\maketitle

\begin{abstract}
In recent years, contrastive learning has drawn significant attention as an effective approach to reducing reliance on labeled data. However, existing methods for self-supervised skeleton-based action recognition still face three major limitations: insufficient modeling of view discrepancies, lack of effective adversarial mechanisms, and uncontrollable augmentation perturbations. To tackle these issues, we propose the \hl{M}ulti-view \hl{M}ini-\hl{M}ax infinite skeleton-data \hl{G}ame \hl{C}ontrastive \hl{L}earning for skeleton-based action \hl{R}ecognition (M\textsuperscript{3}GCLR), a game-theoretic contrastive framework. First, we establish the \hl{I}nfinite \hl{S}keleton-data \hl{G}ame (ISG) model and the ISG equilibrium theorem, and further provide a rigorous proof, enabling mini-max optimization based on multi-view mutual information. Then, we generate normal–extreme data pairs through multi-view rotation augmentation and adopt temporally averaged input as a neutral anchor to achieve structural alignment, thereby explicitly characterizing perturbation strength. Next, leveraging the proposed equilibrium theorem, we construct a strongly adversarial mini-max skeleton-data game to encourage the model to mine richer action-discriminative information. Finally, we introduce the dual-loss equilibrium optimizer to optimize the game equilibrium, allowing the learning process to maximize action-relevant information while minimizing encoding redundancy, and we prove the equivalence between the proposed optimizer and the ISG model. Extensive Experiments show that M\textsuperscript{3}GCLR achieves three-stream 82.1\%, 85.8\% accuracy on NTU RGB+D 60 (X-Sub, X-View) and 72.3\%, 75.0\% accuracy on NTU RGB+D 120 (X-Sub, X-Set). On PKU-MMD Part I and II, it attains 89.1\%, 45.2\% in three-stream respectively, all results matching or outperforming state-of-the-art performance. Ablation studies confirm the effectiveness of each component.
\end{abstract}

\begin{IEEEkeywords}
Self-supervised learning, Contrastive learning, Skeleton-based action recognition, Mini-max infinite skeleton game.
\end{IEEEkeywords}

%
\IEEEpeerreviewmaketitle


\section{Introduction}

\IEEEPARstart{S}{keleton-based} action recognition has recently attracted extensive attention from both academia and industry \cite{han2013enhanced, zheng2023deep}. To address the limitations of supervised learning methods that heavily rely on large-scale labeled data, self-supervised learning has gradually become a research hotspot in this field \cite{Su2019PredictCluster, Lin2020MS2L, Zheng2018Unsupervised, Thoker2021SkeletonContrastive}. As an important branch of self-supervised learning, contrastive learning has demonstrated great potential in skeleton-based action recognition owing to its intuitive learning mechanism and strong feature representation capability. It constructs positive and negative sample pairs, maximizing the similarity between different augmented views of the same sample (positive pairs) while minimizing the similarity across different samples (negative pairs), which drives the model to learn more discriminative representations \cite{He2019MoCo}.

Although existing self-supervised contrastive learning methods \cite{He2019MoCo, Guo2022AimCLR, Lin2023ActionletDependent, Gao2021ContrastiveSSL} have achieved remarkable progress in skeleton-based action recognition, their performance in multi-view self-supervised learning remains unsatisfactory. First, skeleton data are represented in the form of 3D joint coordinates, which are highly sensitive to camera viewpoints. Even slight changes in the observation angle may lead to significant variations in the recognition results. Second, current methods generally lack sufficient adversarial modeling, failing to effectively capture competitive and cooperative relationships in feature learning, directly limiting the upper bound of the representation capability. Third, data augmentation plays a key role in contrastive learning. Appropriate augmentation can encourage the model to explore more motion patterns while reducing interference caused by redundant information.

Strong adversarial learning is an effective way to break the upper bound of model capability. However, previous contrastive learning approaches \cite{He2019MoCo, Guo2022AimCLR, Lin2023ActionletDependent, Gao2021ContrastiveSSL, Guo2024AimCLRPlus} often overlook such adversarial modeling. Game theory serves as a powerful mathematical tool for modeling cooperation and competition among multiple rational and intelligent decision-makers or agents \cite{rapoport2012game}, where mini-max game represents the strongest form of adversarial interaction. Motivated by this, we adopt a mini-max game as the theoretical foundation and further extend it to adapt to the characteristics of skeleton data, proposing the \textbf{M}ulti-view \textbf{M}ini-\textbf{M}ax infinite skeleton-data \textbf{G}ame \textbf{C}ontrastive \textbf{L}earning for skeleton-based action \textbf{R}ecognition (\textbf{M\textsuperscript{3}GCLR}). First, we construct the \textbf{I}nfinite \textbf{S}keleton-data \textbf{G}ame (\textbf{ISG}) model and introduce the ISG equilibrium theorem together with a rigorous proof, establishing a solid theoretical foundation for M\textsuperscript{3}GCLR. Second, to address view dependency, we propose the \textbf{M}ulti-view \textbf{R}otation-based \textbf{A}ugmentation \textbf{M}odule (\textbf{MRAM}) that generates diverse augmented views by dynamically adjusting rotation angles between normal and extreme augmentations. Combined with the temporally averaged input as a neutral anchor, a contrastive triplet consisting of normal augmentation, extreme augmentation, and the averaged data is formed. This design simulates realistic viewpoint variations to enhance robustness while mitigating feature distortions caused by camera shifts through anchor-based alignment. Next, to alleviate the lack of adversarial modeling, we further analyze the game process with infinite strategy sets and propose the \textbf{M}utual-information-based \textbf{M}ini-\textbf{M}ax \textbf{I}nfinite \textbf{S}keleton-data \textbf{G}ame \textbf{M}odule (\textbf{M\textsuperscript{3}ISGM}) based on theoretical derivations. The normal and extreme augmentations act as opposing players, and mutual information serves as the payoff function to formulate the mini-max game. By maximizing the discrepancy between both enhanced views and the averaged data, the M\textsuperscript{3}ISGM explicitly models the competitive dynamics of feature learning, enhancing adversarial capability and pushing the representation learning beyond its existing limits. Finally, to further amplify action-discriminative features across different sequences, reduce redundant information under view variations of the same sequence, and ensure stable convergence of the M\textsuperscript{3}ISGM equilibrium, we propose the \textbf{D}ual-\textbf{L}oss-based \textbf{E}quilibrium \textbf{O}ptimizer (\textbf{DLEO}). By designing a loss function that averages the losses of normal and extreme augmentations, DLEO transforms ISG into a complementary optimization problem, enhancing feature discriminability while viewpoint robustness and adversarial effectiveness.

The major contributions of this paper are summarized as follows:

(1) Based on classical game theory for modeling strategic games and Nash equilibrium analysis, this paper proposes the \textbf{I}nfinite \textbf{S}keleton-data \textbf{G}ame (\textbf{ISG}) model and introduces an ISG equilibrium theorem with a rigorous proof, providing a solid theoretical foundation for the proposed method.

(2) In terms of multi-view data augmentation for skeleton sequences, to overcome insufficient modeling of viewpoint discrepancies, we propose a \textbf{M}ulti-view \textbf{R}otation-based \textbf{A}ugmentation \textbf{M}odule (\textbf{MRAM}). By applying multiple rotation matrices to generate diverse enhanced views, including normal- and extreme-augmented sequences, MRAM enriches the training data distribution and enables the model to better adapt to viewpoint variations.

(3) For optimizing encoder parameters under multi-view inputs, to address the lack of adversarial modeling, excessive redundant information, insufficient feature discrimination, and uncontrollable perturbations in existing methods, we construct the \textbf{M}utual-information-based \textbf{M}ini-\textbf{M}ax \textbf{I}nformation \textbf{S}keleton-data \textbf{G}ame \textbf{M}odule (\textbf{M\textsuperscript{3}ISGM}) based on the proposed ISG equilibrium theorem. Furthermore, we equivalently develop the \textbf{D}ual-\textbf{L}oss-based \textbf{E}quilibrium \textbf{O}ptimizer (\textbf{DLEO}) to reduce redundancy and enhance inter-sequence discriminability while ensuring stable convergence of the equilibrium, enabling the network to learn more representative and discriminative features. In addition, we provide a rigorous proof of the equivalence between DLEO and M\textsuperscript{3}ISGM. 

(4) Extensive experiments show that M\textsuperscript{3}GCLR consistently outperforms existing methods across benchmarks. On NTU RGB+D 60/120, it achieves 2–4\% accuracy gains under X-Sub and X-View/X-Set protocols, surpassing prior SOTA by up to 3\%. On PKU-MMD, M\textsuperscript{3}GCLR delivers competitive or superior performance, including a 4.0\% improvement on Part II. Our code is available at \url{https://github.com/Ixiaohuihuihui/M3GCLR}.

\section{Related Work on Self-supervised Skeleton-based Action Recognition}

Self-supervised skeleton-based action recognition has achieved remarkable progress in recent years by designing diverse spatio-temporal transformations and auxiliary tasks to learn more discriminative latent representations. In 2020, Lin et al. \cite{Lin2020MS2L} proposed MS2L, which integrates motion prediction and jigsaw recognition into a multi-task framework to enhance the representation capability of unlabeled skeleton data from both temporal dependency and motion semantics. In 2021, Rao et al. \cite{Rao2021MomentumLSTM} introduced AS-CAL, which employs multiple augmentation strategies to maintain feature stability under viewpoint perturbations. Li et al. \cite{Li2021CrossViewConsistency} proposed CrosSCLR, achieving cross-view consistent representations through multi-view complementary supervision. Guo et al. \cite{Guo2022AimCLR} introduced extreme augmentations, Energy-Attention-guided Dropout (EADM), and Dual Distribution Discrepancy Minimization (D3M) in AimCLR to significantly improve feature discriminability. Liu et al. \cite{Liu2021AdaptiveMultiView} presented AMW-GCN, which simultaneously models spatial structure and temporal dynamics using an adaptive multi-view transformation module and a multi-stream GCN architecture, thereby improving robustness to viewpoint variations. In 2022, Chen et al. \cite{Chen2022MixedSkeleton} adopted a hybrid augmentation framework combined with topological information to mine hard examples, while Xia et al. \cite{Xia2022LAGANet} proposed LAGA-Net, which enhances frame-level saliency via motion-guided channel attention and global attention. In 2023, Dong et al. \cite{Dong2023HiCLR} proposed HiCLR, employing stage-wise downsampling to achieve multi-scale feature fusion. In 2024, Wu et al. \cite{Wu2024SCDNet} introduced SCD-Net, which decouples spatial and temporal cues to separately model action semantics.

Multi-view modeling is a key factor affecting the performance of contrastive-learning-based skeleton action recognition. In 2019, He et al. \cite{He2019MoCo} proposed MoCo, which leverages a momentum encoder and FIFO queue to enable large-scale contrastive learning, establishing a general framework for skeleton representation learning. In 2021, Li et al. \cite{Li2020ShapeMotion} proposed STVIM, which incorporates rotor-based viewpoint transformation in a geometric algebra framework to achieve multi-stream consistency in shape and motion representations; Li et al. \cite{Li2021CrossViewConsistency} further enhanced consistency in contrastive objectives through multi-view complementary supervision in CrosSCLR. In 2023, Wang et al. \cite{Wang2023SS3D} introduced ASAR, which employs adaptive re-calibration to balance the contrastive sample distribution and mitigate feature collapse and gradient imbalance. Hua et al. \cite{Hua2023PartAwareCLR} proposed Part-Aware Contrastive Learning, which models hard sample distributions via human body partitioning to strengthen local feature discrimination. In 2024, Guo et al. \cite{Guo2024AimCLRPlus} introduced 3s-AimCLR++, which incorporates three-stream aggregation and multi-stream interaction to fully exploit cross-modal complementary information.

\begin{table*}[t]
\centering
\caption{Significant Mathematical Symbols in This Section}
\label{tab:symbols}
\renewcommand{\arraystretch}{1.25}

\begingroup
\setlength{\parindent}{0pt}

\begin{tabular}{|>{\centering\arraybackslash}m{3.7cm}|>{\justifying\arraybackslash}m{10.2cm}|}
\hline
\multicolumn{1}{|c|}{Symbol} & \multicolumn{1}{c|}{Description} \\
\hline
$\Delta(\Omega)$ &
\noindent The set of all probability distributions over the sample space $\Omega$. \\

$\times$ (or $\times_{i \in N}$) &
\noindent The Cartesian product of two or more sets, representing all possible ordered tuples. Example: if $A=\{1\}$ and $B=\{2,3\}$, then $A \times B = \{(1,2),(1,3)\}$. \\

$\Gamma = (N,(C_i)_{i\in N},(u_i)_{i\in N})$ &
\noindent A strategic-form game $\Gamma$, where $N$ is the set of players, $C_i$ is the strategy set of player $i$, and $u_i$ is the utility (or payoff) function of player $i$. \\

$[\cdot]$ &
\noindent Pure strategy in a strategic game. \\

$-j$ &
\noindent The remaining set after removing element $j$ from $N$, i.e., $-j = N\setminus\{j\}$.
Example: if $N=\{1,2,3\}$, then $-2=\{1,3\}$. \\

$\sigma_i$ &
\noindent A randomized strategy of player $i$. Example: if $C_1=\{[a_1],[b_1]\}$, then $\Delta(C_1)=\{p[a_1] + (1-p)[b_1] \mid 0\le p\le1\}$,
and $\sigma_1 = 0.5[a_1] + 0.5[b_1]$. \\

$\sigma_i(c_i)$ &
\noindent The probability assigned to pure strategy $c_i$ by randomized strategy $\sigma_i$. Example: $\sigma_1([a_1]) = 0.5$. \\

$\sigma_S$ &
\noindent The joint randomized strategy profile of all players in subset $S\subset N$. When $S=N$, the subscript can be omitted.
Example: $\sigma_1 = 0.5[a_1] + 0.5[b_1]$, $\sigma_2 = [a_2]$,
then $\sigma_{\{1,2\}} = (\sigma_1, \sigma_2)$. \\
\hline
\end{tabular}

\endgroup
\end{table*}


\section{Infinite Skeleton-data Game}
\label{sec:isg}

Game theory has been widely applied in the field of deep learning in recent years. In 2022, Li et al. \cite{Li2022MDLBP} proposed MDLBP, which models local discrepancies through multi-dimensional geometric relationships, reflecting a game-theoretic concept of “competition and cooperation” within the feature space. Mao et al. \cite{Mao2022CMD} introduced CMD, where a collaborative game mechanism between modalities is established through bidirectional distillation. In 2023, Dong et al. \cite{Dong2023HiCLR} proposed HiCLR with a hierarchical contrastive mechanism, enabling multi-scale features to be progressively optimized through strategic competition. Yu et al. \cite{Yu2024GeoExplainer} proposed GeoExplainer, which integrates helical theory and geometric masking to enhance interpretability in spatiotemporal GCNs, using PMI constraints to generate geometric perturbation masks, offering a new geometric perspective for game-theoretic self-explainable learning. In 2024, Li et al. \cite{Li2024BICAM} introduced BI-CAM, where positive and negative mutual information is modeled to generate interpretable attention maps, essentially reflecting a local game process between cooperation and competition. In the same year, they  \cite{Li2024GTCAM} further proposed GT-CAM, incorporating Shapley value allocation from cooperative game theory to characterize collaborative and competitive interactions among structural nodes in feature interpretability analysis. Similar to these studies, M\textsuperscript{3}GCLR also employs game theory as a key tool to characterize feature interaction and drive optimization. Therefore, its methodological design is fundamentally supported by the mini-max game theory.

The mathematical symbols used in this section are summarized in Table~\ref{tab:symbols}.

Game theory early studies primarily focus on finite games with finite strategy sets \cite{rapoport2012game}, and the extension to infinite strategy games also lacks adaptation to data-driven scenarios, particularly for skeleton data. Therefore, this paper constructs a novel equilibrium framework named the Infinite Skeleton-data Game (ISG). 

To facilitate the theoretical formulation, we first briefly introduce the concept of Mini-Max Game (M\textsuperscript{2}G) and Nash equilibrium.

\begin{definition}[Mini-Max Game, M\textsuperscript{2}G\cite{rapoport2012game}]
\label{def:m2g}
Given a two-person strategic-form game $\Gamma = (\{1,2\}, C_1, C_2, u_1, u_2)$, the mini-max condition is expressed as:
\begin{equation}
\label{eq:1}
u_1 + u_2 = 0.
\end{equation}
Then $\Gamma$ is called a Mini-Max Game (M\textsuperscript{2}G).
\end{definition}
\begin{definition}[(Nash) equilibrium \cite{rapoport2012game}]
\label{def:ne}
Given a game $\Gamma = (N,(C_i)_{i \in N}, (u_i)_{i \in N})$, a randomized strategy profile $\sigma$ is a(n) (Nash) equilibrium iff for all $i \in N$,
\begin{equation}
\label{eq:2}
\sigma_i \in \mathop{\arg\max}_{\tau_i \in \Delta(C_i)}u_i(\tau_i, \sigma_{-i}).
\end{equation}
That is, each $\sigma_i$ must be the best response to $\sigma_{-i}$. Equivalently, if $\sigma$ is an equilibrium of $\Gamma$, it satisfies:
\begin{equation}
\label{eq:3}
\sigma_i(c_i) = 0 \Leftrightarrow c_i \notin \mathop{\arg\max}_{d_i \in C_i}u_i(d_i, \sigma_{-i}), \forall i \in N.
\end{equation}
\end{definition}

We now introduce the definition of ISG:
\begin{definition}[Infinite Skeleton-data Game, ISG]
\label{def:isg}
Consider an infinite strategic-form game $\Gamma = (N,(C_i)_{i \in N}, (u_i(\sigma))_{i \in N})$, $\sigma=(\sigma_i)_{i \in N}$, $\sigma_i \in \Delta(C_i)$. For skeleton-based contrastive learning, denote the encoder parameters vector corresponding to the skeleton data as $\boldsymbol{\uptheta}_i$, and the encoder set as $E = \{E_i\}_{i=1}^{|N|}$, where $|N|$ is the number of elements in the set $N$. Let $E=N$, $\sigma_i=\phi(\boldsymbol{\uptheta}_i)$, and $\boldsymbol{\uptheta} = (\boldsymbol{\uptheta}_i)_{i \in E}$. Assume a continuous mapping $\phi: \boldsymbol{\Theta}_i \mapsto \Delta(C_i)$,  where $\boldsymbol{\Theta}_i \subset \mathbb{R}^{n}$ is the embedding space of $\boldsymbol{\uptheta}_i$, and $n$ is the dimension of $\boldsymbol{\uptheta}_i$. Then the game $\Gamma$ is defined as an Infinite Skeleton-data Game (ISG), denoted by $\Gamma_S = (E,(\boldsymbol{\uptheta}_i)_{i \in E}, (u_i(\boldsymbol{\uptheta})_{i \in E})$.
\end{definition}

Then, to ensure the existence of equilibrium in ISG, based on Definition~\ref{def:ne} and Definition~\ref{def:isg}, we propose the following theorem. It guarantees that when the ISG satisfies certain conditions, the equilibrium must exist. The proof of Theorem~\ref{the:isg} is provided in Appendix~\ref{ap:proof}.

\begin{theorem}[Equilibrium Theorem for ISG]
\label{the:isg}
If the polynomial function of mutual information functions serves as the utility function, and ISG $\Gamma_S = (E,(\boldsymbol{\uptheta}_i)_{i \in E}, (u_i(\boldsymbol{\uptheta}))_{i \in E})$ is defined on a bounded and closed set $\boldsymbol{\Theta}_i$, then the equilibrium of ISG exists.
\end{theorem}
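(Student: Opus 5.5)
We would prove Theorem~\ref{the:isg} by reducing it to a Glicksberg-type fixed-point argument: existence of equilibria in games with compact strategy sets and continuous payoffs. The plan is to work directly in parameter space. Each player $i\in E$ chooses $\boldsymbol{\uptheta}_i\in\boldsymbol{\Theta}_i$, and the relevant objects are the mixed strategies over these parameters, i.e.\ $\Delta(\boldsymbol{\Theta}_i)$. Through the continuous map $\phi$ in Definition~\ref{def:isg}, these mixed strategies induce the randomized strategies $\sigma_i=\phi(\boldsymbol{\uptheta}_i)\in\Delta(C_i)$.

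\textbf{Step 1: compactness.} Since $\boldsymbol{\Theta}_i\subset\mathbb{R}^n$ is bounded and closed, Heine--Borel gives that it is compact. Then $\Delta(\boldsymbol{\Theta}_i)$, equipped with the weak-$*$ topology, is a nonempty, convex, compact metrizable set by Prokhorov/Banach--Alaoglu. The product $\times_{i\in E}\Delta(\boldsymbol{\Theta}_i)$ inherits all of these properties.

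\textbf{Step 2: continuity of the utilities.} Each $u_i(\boldsymbol{\uptheta})$ is a polynomial in mutual-information terms $I(\cdot;\cdot)$. The encoders depend continuously on their parameters, and each mutual information is a continuous functional of the induced joint distributions, which move continuously with $\boldsymbol{\uptheta}$ via $\phi$. Since polynomials preserve continuity, $u_i$ is jointly continuous on the compact set $\times_i\boldsymbol{\Theta}_i$, and therefore bounded. Its multilinear extension $U_i(\mu)=\int u_i\,d(\mu_1\otimes\cdots\otimes\mu_{|E|})$ is then continuous in the weak-$*$ topology, because the product of weakly convergent measures converges weakly and $u_i$ is bounded and continuous. $U_i$ is also linear, hence quasi-concave, in the player's own $\mu_i$.

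\textbf{Step 3: fixed point.} Define the best-response correspondence
\[
B_i(\mu_{-i})=\mathop{\arg\max}_{\tau_i\in\Delta(\boldsymbol{\Theta}_i)}U_i(\tau_i,\mu_{-i}).
\]
It is nonempty by Weierstrass and convex by linearity. It has a closed graph by Berge's maximum theorem, using the continuity from Step 2. Applying the Kakutani--Fan--Glicksberg fixed-point theorem to $B=\times_iB_i$ yields a profile $\mu^*$ with $\mu^*_i\in B_i(\mu^*_{-i})$ for all $i$. Pushing $\mu^*$ forward through $\phi$ gives $\sigma^*$. Condition (\ref{eq:2}) holds because $U_i$ depends on $\mu$ only through the induced $\sigma$, so $\sigma^*$ is a Nash equilibrium in the sense of Definition~\ref{def:ne}. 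Condition (\ref{eq:3}) then follows from linearity: the support of $\mu^*_i$ must lie in the set of pure best responses.

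\textbf{Main obstacle.} The hard step is Step 2, the continuity of mutual information. In general, mutual information is only lower semicontinuous under weak convergence and can be infinite. We would handle this by restricting to the setting actually used in the paper: finite-dimensional encoder outputs whose distributions have densities that are continuous in $\boldsymbol{\uptheta}$, or MI estimators such as InfoNCE-type bounds that are explicit continuous functions of bounded features. Under that assumption, continuity of $u_i$ follows by dominated convergence. If the assumption cannot be made, we would fall back on Dasgupta--Maskin-type results for discontinuous games, but we expect the boundedness of $\boldsymbol{\Theta}_i$ together with smooth encoders to suffice.
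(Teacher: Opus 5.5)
There is a genuine gap, and it is in your last step rather than in the continuity of mutual information (the paper simply asserts that continuity).

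\textbf{What your argument proves.} Your Steps 1--3 are a correct Glicksberg argument. What they deliver is an equilibrium $\mu^*\in\times_{i\in E}\Delta(\boldsymbol{\Theta}_i)$ of the mixed extension of the parameter game, i.e.\ a probability distribution over encoder parameters. That is not what the theorem asserts. In the paper, an equilibrium of $\Gamma_S$ is a parameter profile $\boldsymbol{\uptheta}^*$ whose image $\sigma^*=\phi(\boldsymbol{\uptheta}^*)$ satisfies Eq.~(\ref{eq:2}). This is how the paper's proof concludes, and it is how $(\boldsymbol{\uptheta}_1,\boldsymbol{\uptheta}_2)$ is used later in the DLEO equivalence. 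Randomization is already built into $\sigma_i\in\Delta(C_i)$, so you have introduced a second layer of mixing, and the final step cannot undo it.

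\textbf{Where the final step fails.}
\begin{enumerate}
\item $\phi_{\#}\mu_i^*$ is a measure on $\Delta(C_i)$, not a point of it. To obtain some $\sigma_i^*$ you must take a barycenter $\int\phi\,d\mu_i^*$. Your claim that $U_i$ depends on $\mu$ only through that $\sigma$ holds only if the payoff factors through $\phi$ and is affine in each $\sigma_j$. Polynomials in mutual-information terms are not affine: for $u_2=I_1+(I_1-I_2)^2$ in Eq.~(\ref{eq:14}), $\mathbb{E}_{\mu}[(I_1-I_2)^2]$ contains the variance of $I_1-I_2$ under $\mu$. So two mixtures with the same barycenter can have different payoffs.
\item Even granting a $\sigma^*$, Eq.~(\ref{eq:2}) requires optimality against every $\tau_i\in\Delta(C_i)$. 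Glicksberg only certifies optimality against deviations in $\Delta(\boldsymbol{\Theta}_i)$, i.e.\ against mixtures of points of $\phi(\boldsymbol{\Theta}_i)$.
\item Nothing produces a $\boldsymbol{\uptheta}^*$, because the barycenter need not lie in $\phi(\boldsymbol{\Theta}_i)$.
\end{enumerate}
Your support argument for Eq.~(\ref{eq:3}) mixes up the same two levels: it concerns pure best responses $\boldsymbol{\uptheta}_i\in\boldsymbol{\Theta}_i$, not pure strategies $c_i\in C_i$.

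\textbf{Why this cannot be patched for free, and how the paper differs.} Compactness and continuity are the only properties your argument uses, and they cannot close this gap. Take $\boldsymbol{\Theta}_i=\{0,1\}$, with $\phi$ sending the two points to the two pure strategies of matching pennies. Your argument returns the 50/50 mixture over parameters, yet no parameter profile is an equilibrium.

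The paper's route avoids the extra layer. It applies Kakutani directly to $\times_{i\in N}\Delta(C_i)$, which is already convex, so its fixed point $\sigma^*$ is optimal against all of $\Delta(C_i)$. Only then does it pull back by setting $\sigma^*=\phi(\boldsymbol{\uptheta}^*)$, a step that tacitly requires $\sigma^*$ to lie in the image of $\phi$.

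To finish along your lines you need one of the following, and none of them is among the stated assumptions:
\begin{itemize}
\item A hypothesis that lets a mixture over parameters be replaced by a single parameter. For instance, $\phi(\boldsymbol{\Theta}_i)=\Delta(C_i)$ together with payoffs affine in each $\sigma_i$; then the barycenter of $\mu_i^*$ pulls back to an equilibrium $\boldsymbol{\uptheta}^*$.
\item A pure-strategy existence theorem in parameter space, such as Debreu--Glicksberg--Fan. This needs $\boldsymbol{\Theta}_i$ convex and $u_i$ quasi-concave in $\boldsymbol{\uptheta}_i$.
\end{itemize}
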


\begin{figure*}
  \begin{center}
  \includegraphics[width=0.85\textwidth]{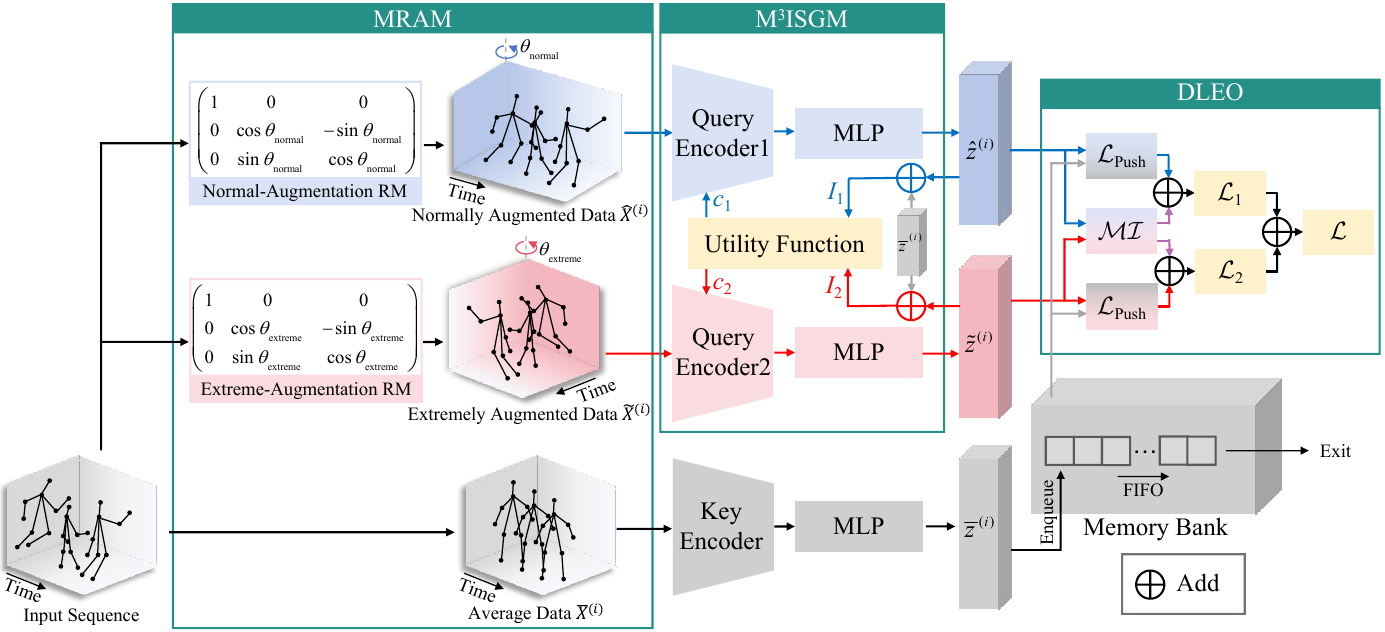}\\
  \caption{Pipeline of the proposed M\textsuperscript{3}GCLR. The input sequence $\mathbf{X}^{(i)}$ is first processed by the Multi-view Rotation-based Augmentation Module, where the normal-augmentation Rotation Matrix, extreme-augmentation Rotation Matrix, and batch averaging are applied to generate a normally augmented data, an extremely augmented data, and the average data, respectively. These three views are fed into query encoder 1, query encoder 2, and the key encoder to obtain the feature embeddings through an MLP projection head. In the Mutual-information-based Mini-Max Infinite Skeleton-data Game Module, the mean mutual information among feature embeddings is computed to construct the utility functions of the ISG. By updating the encoder parameters to maximize the ISG utilities, strong adversarial feature learning is achieved. Finally, in the Dual-Loss-based Equilibrium Optimizer, the optimization is performed using both the $\mathcal{L}_{\mathrm{Push}}$ loss and the KL-divergence-based $\mathcal{MI}$ objective, forming the final loss $\mathcal{L}$. This process further optimizes the model parameters and ensures the convergence of the ISG.}\label{fig:1}
  \end{center}
\end{figure*}

Since solving the equilibrium of a general ISG is a highly complex problem, we further construct a Mini-Max ISG, and use the following Theorem~\ref{the:m2} to derive the equilibrium of ISG.

\begin{theorem}[Equilibrium Theorem for M\textsuperscript{2}G \cite{rapoport2012game}]
\label{the:m2}
For a M\textsuperscript{2}G, a randomized strategy profile $(\sigma_1, \sigma_2)$ is a Nash equilibrium iff:
\begin{equation}
\label{eq:4}
\begin{aligned}
u_1(\sigma_1,\sigma_2) 
&= \mathop{\min}_{\tau_2 \in \Delta(C_2)} \mathop{\max}_{\tau_1 \in \Delta(C_1)} u_1(\tau_1, \tau_2) \\
&= \mathop{\max}_{\tau_1 \in \Delta(C_1)} \mathop{\min}_{\tau_2 \in \Delta(C_2)} u_1(\tau_1, \tau_2).
\end{aligned}
\end{equation}
\end{theorem}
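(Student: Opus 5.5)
The plan is to prove both directions by exploiting the zero-sum structure \eqref{eq:1}. Since $u_2=-u_1$, player~2 maximizing $u_2$ is the same as player~2 minimizing $u_1$. So the equilibrium condition \eqref{eq:2} for $i=2$ reads $\sigma_2\in\arg\min_{\tau_2\in\Delta(C_2)}u_1(\sigma_1,\tau_2)$. All optima in \eqref{eq:4} are attained. For finite $C_i$, $\Delta(C_i)$ is a compact simplex and $u_1$ is bilinear, hence continuous, in $(\tau_1,\tau_2)$. In the ISG setting one would instead use the compactness hypothesis of Theorem~\ref{the:isg} together with continuity of $\phi$ and of the mutual-information utilities. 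Either way, $\max$ and $\min$ may be written instead of $\sup$ and $\inf$.

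The first step is the weak-duality inequality. For any $\tau_1,\tau_2$ we have $\min_{\tau_2'}u_1(\tau_1,\tau_2')\le u_1(\tau_1,\tau_2)\le \max_{\tau_1'}u_1(\tau_1',\tau_2)$. Taking $\max$ over $\tau_1$ on the left and $\min$ over $\tau_2$ on the right gives
\begin{equation*}
\max_{\tau_1}\min_{\tau_2}u_1(\tau_1,\tau_2)\le \min_{\tau_2}\max_{\tau_1}u_1(\tau_1,\tau_2).
\end{equation*}

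For ($\Rightarrow$), let $(\sigma_1,\sigma_2)$ be an equilibrium. By \eqref{eq:2} for player~1,
\begin{equation*}
u_1(\sigma_1,\sigma_2)=\max_{\tau_1}u_1(\tau_1,\sigma_2)\ge \min_{\tau_2}\max_{\tau_1}u_1(\tau_1,\tau_2).
\end{equation*}
By \eqref{eq:2} for player~2, rewritten as above,
\begin{equation*}
u_1(\sigma_1,\sigma_2)=\min_{\tau_2}u_1(\sigma_1,\tau_2)\le \max_{\tau_1}\min_{\tau_2}u_1(\tau_1,\tau_2).
\end{equation*}
Chaining these two bounds with weak duality forces all the quantities to coincide, which is \eqref{eq:4}. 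As a by-product, $\sigma_1$ attains the outer maximum of the max-min and $\sigma_2$ attains the outer minimum of the min-max.

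For ($\Leftarrow$) I will read \eqref{eq:4} in the standard sense of \cite{rapoport2012game}. That is, the common value $v$ is attained by the profile itself: $\sigma_1$ is a max-min strategy, $\min_{\tau_2}u_1(\sigma_1,\tau_2)=v$, and $\sigma_2$ is a min-max strategy, $\max_{\tau_1}u_1(\tau_1,\sigma_2)=v$. Then
\begin{equation*}
v=\min_{\tau_2}u_1(\sigma_1,\tau_2)\le u_1(\sigma_1,\sigma_2)\le \max_{\tau_1}u_1(\tau_1,\sigma_2)=v.
\end{equation*}
Hence $u_1(\sigma_1,\sigma_2)=\max_{\tau_1}u_1(\tau_1,\sigma_2)$, so $\sigma_1$ is a best response. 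Likewise $u_2(\sigma_1,\sigma_2)=-v=\max_{\tau_2}u_2(\sigma_1,\tau_2)$, so $\sigma_2$ is a best response. Therefore $(\sigma_1,\sigma_2)$ satisfies Definition~\ref{def:ne}.

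The main obstacle is this converse direction. Taken literally, equality of the scalar value $u_1(\sigma_1,\sigma_2)$ with $v$ does not force $\sigma_i$ to be optimal strategies. So I would state this reading explicitly, namely that the outer $\max$ and $\min$ in \eqref{eq:4} are attained at $\sigma_1$ and $\sigma_2$ respectively. A secondary point is justifying attainment of the extrema in the infinite (ISG) case, which relies on the compactness assumption of Theorem~\ref{the:isg}.
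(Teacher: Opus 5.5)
The paper does not prove Theorem~\ref{the:m2}. It is imported by citation, so there is no in-paper argument to compare routes with. Your argument is the standard saddle-point proof, and it is correct. Weak duality, the two best-response identities in ($\Rightarrow$), and the sandwich $v\le u_1(\sigma_1,\sigma_2)\le v$ in ($\Leftarrow$) are all sound.

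Your caveat on the converse is necessary, not pedantic: as printed, the ``if'' direction is false. Take matching pennies, with $u_1=1$ on a match and $-1$ otherwise, so the value is $0$. The profile $\sigma_1=[H]$, $\sigma_2=\tfrac12[H]+\tfrac12[T]$ has $u_1(\sigma_1,\sigma_2)=0$, so Eq.~(\ref{eq:4}) holds. Yet player~2 gains by switching to $[T]$, so the profile is not an equilibrium. The reading you adopt is the correct form of the result, as stated in standard texts: $\sigma_1$ a max-min strategy, $\sigma_2$ a min-max strategy, with common value $v$. Under that reading your equivalence is exactly right.

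One correction concerns your side remark on attainment. Appealing to the compactness hypothesis of Theorem~\ref{the:isg} does not deliver what you want. Compactness of $\boldsymbol{\Theta}_i$ and continuity of $\phi$ make $\phi(\boldsymbol{\Theta}_i)$ compact, not $\Delta(C_i)$, which is the set Eq.~(\ref{eq:4}) optimizes over. Nothing in Theorem~\ref{the:isg} makes $u_1$ continuous on $\Delta(C_1)\times\Delta(C_2)$ either.

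Fortunately you need none of this:
\begin{itemize}
\item Read the inner operations as $\sup$/$\inf$.
\item Weak duality $\sup_{\tau_1}\inf_{\tau_2}u_1\le\inf_{\tau_2}\sup_{\tau_1}u_1$ holds unconditionally.
\item In ($\Rightarrow$), the outer extrema are attained at $\sigma_1$ and $\sigma_2$ simply because $(\sigma_1,\sigma_2)$ is an equilibrium; this is your own by-product observation.
\item In ($\Leftarrow$), attainment is part of the hypothesis.
\end{itemize}
Your argument also never uses bilinearity or continuity of $u_1$. So it proves the equivalence for every two-person zero-sum game, finite or not. Compactness and continuity matter only for the \emph{existence} of a common value (von Neumann, Sion), which Theorem~\ref{the:m2} does not claim.
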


In summary, to obtain a solvable and strongly adversarial ISG model, we formulate M\textsuperscript{3}GCLR as a Mini-Max mutual-information ISG. Specifically, the normal-augmented encoder and extreme-augmented encoder are regarded as the strategic players in the ISG, and mutual-information functions are adopted as the utility functions to drive the game process. In this way, the constructed Mini-Max ISG not only guarantees the existence of ISG equilibrium but also satisfies the equilibrium condition in Eq.~(\ref{eq:4}), thereby further enhancing the performance of our model.

\section{Method}

\subsection{Overview of M\textsuperscript{3}GCLR}

The proposed Multi-view Mini-Max Infinite Skeleton-data Game Contrastive Learning Network (M\textsuperscript{3}GCLR) follows the basic contrastive learning paradigm of MoCov2~\cite{He2019MoCo}. It mainly consists of three components: the Multi-view Rotation-based Augmentation Module (MRAM), the Mutual-information-based Mini-Max Infinite Skeleton-data Game Module (M\textsuperscript{3}ISGM), and the Dual-Loss-based Equilibrium Optimizer (DLEO). The overall architecture is illustrated in Fig.~\ref{fig:1}.

First, let the input skeleton sequence of MRAM be denoted as $\mathbf{X}^{(i)}$, where its $k$-th joint sequence is represented as Eq.~(\ref{eq:5}):
\begin{equation}
\label{eq:5}
\begin{aligned}
\mathbf{X}_k^{(i)} = \{(x_{i1k},y_{i1k},z_{i1k}),(x_{i2k},y_{i2k},z_{i2k}),...,\\
\quad{(x_{iTk},y_{iYk},z_{iTk})\}},
\end{aligned}
\end{equation}
where $x_{itk}$, $y_{itk}$, $z_{itk}$ are the three-dimensional coordinates of the $k$-th joint in the $t$-th frame of the $i$-th skeleton sequence, and $T$ denotes the total number of frames in $\mathbf{X}^{(i)}$. By incorporating a normal rotation angle $\theta_{normal}$ and an extreme rotation angle $\theta_{extreme}$, multi-view augmentation is applied to capture multi-level motion features. This results in the normally augmented data $\hat{\mathbf{X}}^{(i)}$ and the extremely augmented data $\tilde{\mathbf{X}}^{(i)}$. Meanwhile, a temporal average operation is used to generate the average data $\bar{\mathbf{X}}^{(i)}$. These three sequences are encoded by query encoder 1, query encoder 2 and the key encoder, producing the corresponding feature representations $\hat{\mathbf{z}}^{(i)}$, $\tilde{\mathbf{z}}^{(i)}$, and $\bar{\mathbf{z}}^{(i)}$, respectively.

Then, to model the adversarial between the two query encoders, the features $\hat{\mathbf{z}}^{(i)}$, $\tilde{\mathbf{z}}^{(i)}$, and $\bar{\mathbf{z}}^{(i)}$ are fed into M\textsuperscript{3}ISGM for Mini-Max Mutual-information ISG, updating the parameters of both query encoders during each training iteration.

\begin{figure*}
  \begin{center}
  \includegraphics[width=0.5\textwidth]{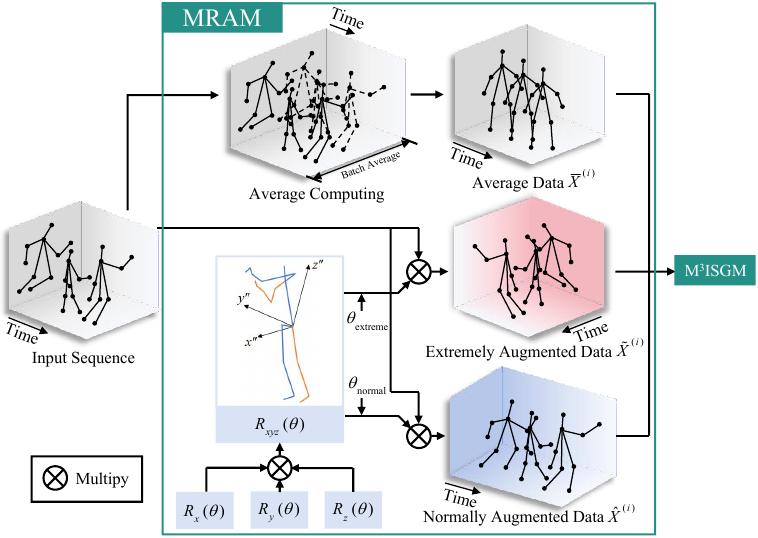}\\
  \caption{Block diagram of MRAM. Rotation Matrices (RMs) around the $x$, $y$, $z$ axes $\mathbf{R}_x(\theta)$, $\mathbf{R}_y(\theta)$, $\mathbf{R}_z(\theta)$ are combined through matrix multiplication to obtain a multi-axis RM $\mathbf{R}_{xyz}(\theta)$. After the input skeleton sequence is processed by the Multi-view Rotation-based Augmentation Module, three transformed views are generated: (a) the average data $\bar{\mathbf{X}}^{(i)}$ obtained by batch averaging, (b) the normally augmented data $\hat{\mathbf{X}}^{(i)}$ produced by multiplying the input with the normal-augmented RM $\mathbf{R}_{xyz}(\theta_{normal})$, and (c) the extremely augmented data $\tilde{\mathbf{X}}^{(i)}$ derived from the extreme-augmented RM $\mathbf{R}_{xyz}(\theta_{extreme})$. These augmented sequences are then fed into the subsequent Mutual-information-based Mini-Max Infinite Skeleton-data Game Module to facilitate robust adversarial representation learning.}\label{fig:2}
  \end{center}
\end{figure*}

Finally, to maximize discriminability while reducing redundant information between normal and extreme augmentations, the mutual information between $\hat{\mathbf{z}}^{(i)}$ and $\tilde{\mathbf{z}}^{(i)}$ is optimized adversarially. The features are fed into DLEO, yielding the normal augmentation loss $\mathcal{L}_1$ and the extreme augmentation loss $\mathcal{L}_2$. Their weighted average forms a dual-loss contrastive learning objective $\mathcal{L}$, which jointly trains the two query encoders in an equilibrium-driven manner, enabling more robust and distinctive skeleton representation learning.

\subsection{Multi-view Rotation-based Augmentation Module}

In data augmentation strategies, rotation is a common spatial transformation that enhances model robustness and generalization by enabling adaptation to different viewpoints and angles \cite{Gao2021ContrastiveSSL}. Therefore, we design a Multi-view Rotation-based Augmentation Module (MRAM), which applies multiple Rotation Matrices (RMs) to the input sequence to generate normally augmented data and extremely augmented data, corresponding to motion detail preservation and motion pattern augmentation, respectively. Meanwhile, an average data is produced by computing the temporal average of the input sequence $\mathbf{X}^{(i)}$. The block diagram of MRAM is illustrated in Fig.~\ref{fig:2}.

First, for the input sequence $\mathbf{X}^{(i)}$ is defined in Eq.~(\ref{eq:5}), the RMs around the $x$, $y$, and $z$ axes are denoted as $\mathbf{R}_x(\theta)$, $\mathbf{R}_y(\theta)$, and $\mathbf{R}_z(\theta)$, respectively, as shown in Eqs.~(\ref{eq:6})--(\ref{eq:8}):
\begin{equation}
\label{eq:6}
\mathbf{R}_x(\theta) =
\begin{pmatrix}
1 & 0 & 0 \\
0 & \cos\theta & -\sin\theta \\
0 & \sin\theta & \cos\theta
\end{pmatrix},
\end{equation}
\begin{equation}
\label{eq:7}
\mathbf{R}_y(\theta) =
\begin{pmatrix}
\cos\theta & 0 & \sin\theta \\
0 & 1 & 0 \\
-\sin\theta & 0 & \cos\theta
\end{pmatrix},
\end{equation}
\begin{equation}
\label{eq:8}
\mathbf{R}_z(\theta) =
\begin{pmatrix}
\cos\theta & -\sin\theta & 0 \\
\sin\theta & \cos\theta & 0 \\
0 & 0 & 1
\end{pmatrix},
\end{equation}
where $\theta$ denotes the rotation angle. By multiplying the above RMs, we obtain the combined Rotation Matrix (RM) $\mathbf{R}_{xyz}(\theta)$, as expressed in Eq.~(\ref{eq:9}):
\begin{equation}
\label{eq:9}
\mathbf{R}_{xyz}(\theta) = \mathbf{R}_x(\theta)\mathbf{R}_y(\theta)\mathbf{R}_z(\theta).
\end{equation}

Next, the input sequence $\mathbf{X}^{(i)}$ is fed into MRAM, where a rotation angle $\theta_{normal}$ is randomly selected from a small-angle range $[-\hat{\theta}, \hat{\theta}]$ to generate a normal-augmentation RM $\mathbf{R}_{xyz}(\theta_{normal})$. By applying $\mathbf{R}_{xyz}(\theta_{normal})$ to $\mathbf{X}^{(i)}$, the normally augmented data $\hat{\mathbf{X}}^{(i)}$ is obtained, as expressed in Eq.~(\ref{eq:10}):
\begin{equation}
\label{eq:10}
\hat{\mathbf{X}}^{(i)} = \mathbf{R}_{xyz}(\theta_{normal}) \cdot \mathbf{X}^{(i)}.
\end{equation}
Here, $\hat{\mathbf{X}}^{(i)}$ denotes the normally augmented data, $\mathbf{X}^{(i)}$ is the input sequence, and $\mathbf{R}_{xyz}(\theta_{normal})$ is the normal-augmentation RM.

\begin{figure}[t]
  \begin{center}
  \includegraphics[width=\columnwidth]{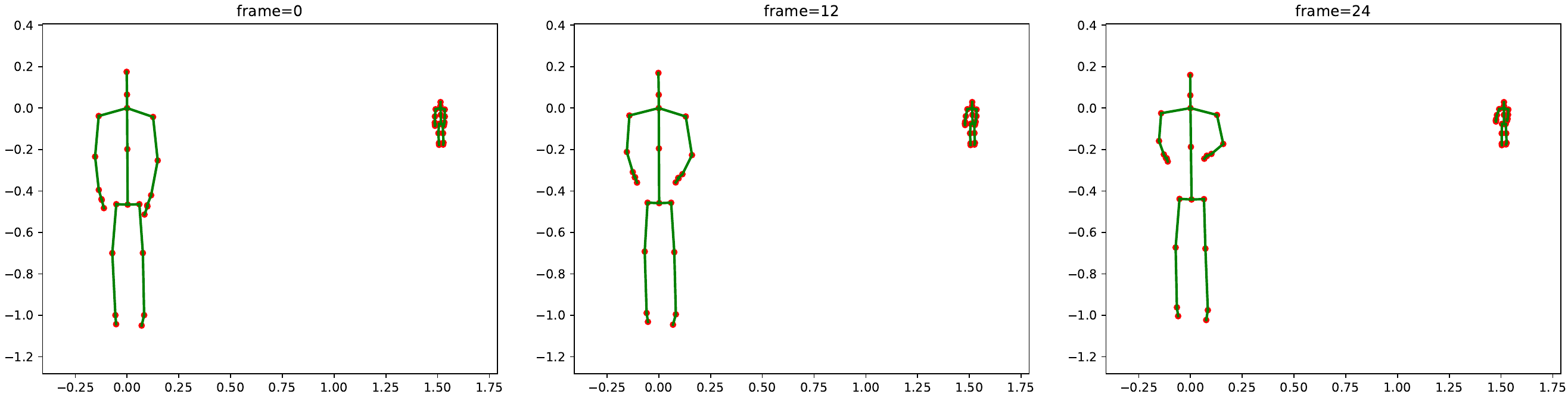}\\
  \caption{Visualizations of mean-motion sequences (the another isolated sequence represents another skeleton instance).}\label{fig:3}
  \end{center}
\end{figure}

\begin{figure*}[t]
  \begin{center}
  \includegraphics[width=0.65\textwidth]{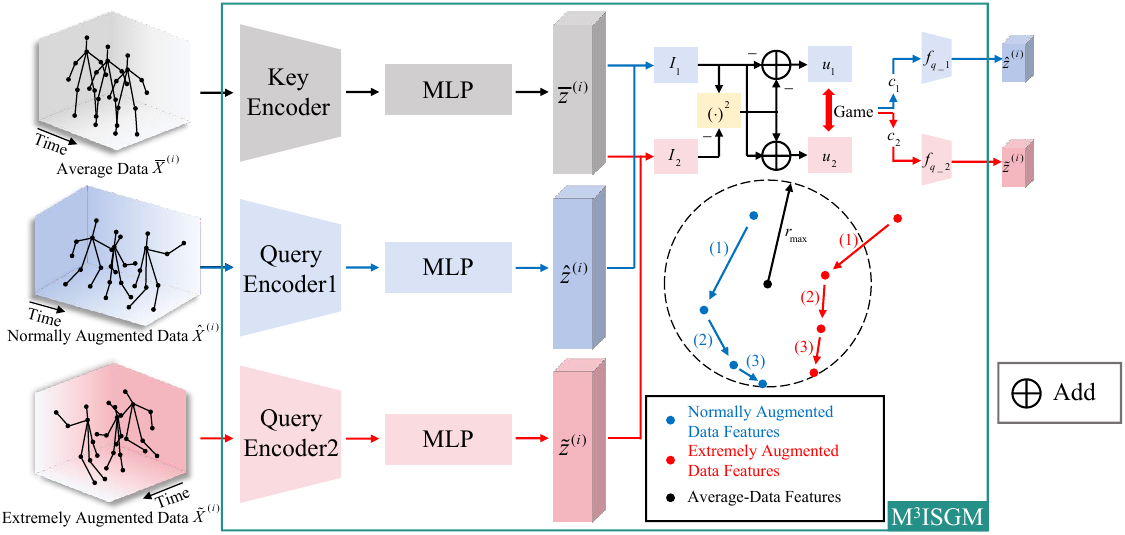}\\
  \caption{Block diagram of M\textsuperscript{3}ISGM. The average data $\bar{\mathbf{X}}^{(i)}$, the normally augmented data $\hat{\mathbf{X}}^{(i)}$, and the extremely augmented data $\tilde{\mathbf{X}}^{(i)}$ produced by the Multi-view Rotation-based Augmentation Module are fed into the Mutual-information-based Mini-Max Infinite Skeleton-data Game Module (M\textsuperscript{3}ISGM). After passing through the key encoder, query encoder 1 (encoder 1) and query encoder 2 (encoder 2), their respective feature representations $\bar{\mathbf{z}}^{(i)}$, $\hat{\mathbf{z}}^{(i)}$, and $\tilde{\mathbf{z}}^{(i)}$ are obtained. The mutual information between $\hat{\mathbf{z}}^{(i)}$ and $\bar{\mathbf{z}}^{(i)}$ is computed as $I_1$, and the mutual information between $\tilde{\mathbf{z}}^{(i)}$ and $\bar{\mathbf{z}}^{(i)}$ is computed as $I_2$. The squared difference between $I_1$ and $I_2$ is added to $I_1$ is obtained encoder 2's utility function $u_2$, while encoder 1's utility function is the negation of encoder 2's, i.e. $u_1 = -u_2$. Based on the obtained equilibrium solution, the parameters of query encoder 1 and query encoder 2 (i.e., $f_{q\_1}$ and $f_{q\_2}$) are updated accordingly, which further provides support for the equilibrium optimization based on dual-loss learning. }\label{fig:4}
  \end{center}
\end{figure*}

Meanwhile, $\mathbf{X}^{(i)}$ is also input to MRAM, where a rotation angle $\theta_{extreme}$ is randomly selected from a large-angle range $[-\tilde{\theta}, \tilde{\theta}]$ to generate the extreme-augmentation RM $\mathbf{R}_{xyz}(\theta_{extreme})$. Applying $\mathbf{R}_{xyz}(\theta_{extreme})$ to $\mathbf{X}^{(i)}$ yields the extremely augmented data $\tilde{\mathbf{X}}^{(i)}$, as given in Eq.~(\ref{eq:11}):
\begin{equation}
\label{eq:11}
\tilde{\mathbf{X}}^{(i)} = \mathbf{R}_{xyz}(\theta_{extreme}) \cdot \mathbf{X}^{(i)}.
\end{equation}
Here, $\tilde{\mathbf{X}}^{(i)}$ denotes the extremely augmented data, $\mathbf{X}^{(i)}$ is the input sequence, and $\mathbf{R}_{xyz}(\theta_{extreme})$ is the extreme-augmentation RM.

Finally, since most frames in a complete skeleton sequence exhibit no significant movement, as shown in Fig.~\ref{fig:3}, we observe that the positions of most joints remain stable without notable variations. Based on this observation, the average data of $\mathbf{X}^{(i)}$ is computed using Eq.~(\ref{eq:12}):
\begin{equation}
\label{eq:12}
\bar{\mathbf{X}}^{(i)} = \frac{1}{B} \sum_{b=1}^{B}\mathbf{X}_b^{(i)},
\end{equation}
where $\bar{\mathbf{X}}^{(i)}$ represents the averaged sequence of the input sequence $\mathbf{X}^{(i)}$, and $\mathbf{X}_b^{(i)}$ denotes the $b$-th skeleton sequence in batch $B$ at the $i$-th iteration.

\subsection{Mutual-information-based Mini-Max Infinite Skeleton-data Game Module}
\label{sec:m3isgm}

Different observation viewpoints can lead to significant variations in the 3D coordinate representation and spatio-temporal motion patterns of the same action. However, existing contrastive learning methods lack effective adversarial modeling, which fundamentally limits further performance improvement. Therefore, based on the theoretical extension of game theory introduced in Section~\ref{sec:isg}, we propose the Mutual-information-based Mini-Max Infinite Skeleton-data Game Module (M\textsuperscript{3}ISGM). By formulating a game-theoretic optimization, M\textsuperscript{3}ISGM maximizes the discrepancy between augmented data $\hat{\mathbf{X}}^{(i)}$ and $\tilde{\mathbf{X}}^{(i)}$ generated by MRAM and the average data $\bar{\mathbf{X}}^{(i)}$, thereby enhancing the adversarial capability of the network. The block diagram of M\textsuperscript{3}ISGM is shown in Fig.~\ref{fig:4}.

First, M\textsuperscript{3}ISGM receives the augmented data $\hat{\mathbf{X}}^{(i)}$ and $\tilde{\mathbf{X}}^{(i)}$ from MRAM, along with the average data $\bar{\mathbf{X}}^{(i)}$. All of them are fed into the encoders of the backbone network ST-GCN (including the key encoder $f_k(\cdot)$ and the query encoders $f_{q\_1}(\cdot)$, $f_{q\_2}(\cdot)$) and the MLP projection head $g(\cdot)$. The corresponding feature representations are obtained as: $\hat{\mathbf{z}}^{(i)} = g(f_{q\_1}(\hat{\mathbf{X}}^{(i)}))$, $\tilde{\mathbf{z}}^{(i)} = g(f_{q\_2}(\tilde{\mathbf{X}}^{(i)}))$, $\bar{\mathbf{z}}^{(i)} = g(f_k(\bar{\mathbf{X}}^{(i)}))$.

\begin{figure*}[t]
  \begin{center}
  \includegraphics[width=0.75\textwidth]{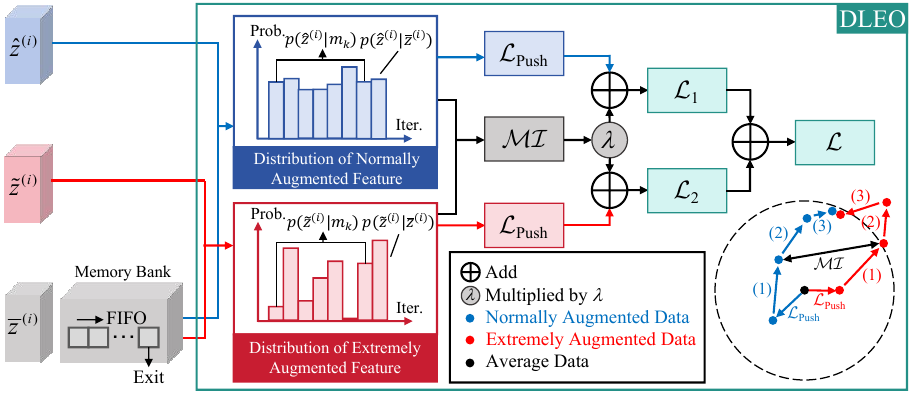}\\
  \caption{Block diagram of DLEO. After obtaining encoded features from the Mutual-information-based Mini-Max Infinite Skeleton-data Game Module, we feed them into the Dual-Loss-based Equilibrium Optimizer (DLEO). In DLEO, we first compute the distributions among the negative feature $\mathbf{m}_k$, the average feature $\bar{\mathbf{z}}^{(i)}$, the normally augmented feature $\hat{\mathbf{z}}^{(i)}$, and the extremely augmented feature $\tilde{\mathbf{z}}^{(i)}$, formulated as $p(\hat{\mathbf{z}}^{(i)}|\bar{\mathbf{z}}^{(i)})$, $p(\tilde{\mathbf{z}}^{(i)}|\bar{\mathbf{z}}^{(i)})$, $p(\hat{\mathbf{z}}^{(i)}|\mathbf{m}_k)$, and $p(\tilde{\mathbf{z}}^{(i)}|\mathbf{m}_k)$. Based on these distributions, the InfoNCE loss $\mathcal{L}_{\mathrm{Push}}$ and the KL divergence $\mathcal{MI}$ are calculated. Finally, the overall loss function $\mathcal{L}$ is computed from $\mathcal{L}_{\mathrm{Push}}$ and $\mathcal{MI}$, enabling multi-view Mini-Max game-driven contrastive learning to achieve more robust representation learning for skeleton-based action recognition. }\label{fig:5}
  \end{center}
\end{figure*}

Next, a Mini-Max Infinite Skeleton-data Game (ISG) is formulated. Let $\Gamma_S=(\{1,2\}, \boldsymbol{\uptheta}_1, \boldsymbol{\uptheta}_2, u_1, u_2)$, where $E = \{1,2\}$, query encoder 1 and query encoder 2 correspond to encoder 1 and encoder 2 in $\Gamma_S$, respectively, and encoder 1 is parameterized by $\boldsymbol{\uptheta}_1$, encoder 2 is parameterized by $\boldsymbol{\uptheta}_2$, and $u_1$ and $u_2$ denote the utility functions of encoder 1 and 2, respectively. To strengthen the adversarial modeling and utilize the polynomial function of mutual information as the utility functions, we define:
\begin{equation}
\label{eq:13}
u_1(\boldsymbol{\uptheta}) = -I_1 - (I_1 - I_2)^2, \quad{\boldsymbol{\uptheta} = (\boldsymbol{\uptheta}_1, \boldsymbol{\uptheta}_2)},
\end{equation}
\begin{equation}
\label{eq:14}
u_2(\boldsymbol{\uptheta}) = I_1 + (I_1 - I_2)^2,
\end{equation}
where $I_1 = I(\hat{\mathbf{z}}^{(i)}; \bar{\mathbf{z}}^{(i)})$ and $I_2 = I(\tilde{\mathbf{z}}^{(i)}; \bar{\mathbf{z}}^{(i)})$, and $I(\cdot; \cdot)$ denotes the mutual information of a pair of feature samples, which are negatively correlated to the statistical correlation between the normally and extremely augmented data. Mini-Max ISG $\Gamma_S$ enables adversarial optimization that maximizes the discrepancy between $\tilde{\mathbf{z}}^{(i)}$ and $\bar{\mathbf{z}}^{(i)}$, while matching the discrepancy between $\hat{\mathbf{z}}^{(i)}$ and $\bar{\mathbf{z}}^{(i)}$.

However, the adversarial learning under Mini-Max ISG alone may not fully guarantee the maximization of motion-informative features while minimizing redundant components. Therefore, the resulting features $\hat{\mathbf{z}}^{(i)}$, $\tilde{\mathbf{z}}^{(i)}$, and $\bar{\mathbf{z}}^{(i)}$ are further fed into the DLEO module to ensure the final convergence of the game.

\subsection{Dual-Loss-based Equilibrium Optimizer}

To maximize action-specific information across different sample instances while minimizing redundant information between the normally augmented and extremely augmented features, M\textsuperscript{3}ISGM models the normal-augmentation query encoder and the extreme-augmentation query encoder as two encoders in an ISG, using the average data as the anchor reference to construct a Mini-Max mutual-information ISG and obtain its equilibria. However, not all equilibria inherently satisfy the optimization goals of the network. Therefore, introducing additional constraints to search for a desirable equilibrium becomes essential for the convergence of the Mini-Max mutual-information ISG.

Inspired by focal effect theory in game theory \cite{rapoport2012game}, and motivated by the established equivalence between Dual-Loss-based Equilibrium Optimizer (DLEO) and the Mini-Max Infinite Skeleton-data Game $\Gamma_S$ introduced in the Section~\ref{sec:m3isgm}, we design DLEO to filter the equilibria. Specifically, the InfoNCE loss \cite{He2019MoCo} between $\bar{\mathbf{z}}^{(i)}$ and $\hat{\mathbf{z}}^{(i)}$, as well as $\bar{\mathbf{z}}^{(i)}$, $\tilde{\mathbf{z}}^{(i)}$, is computed to maximize action-specific information relative to the average feature. Meanwhile, the KL divergence between $\hat{\mathbf{z}}^{(i)}$, $\tilde{\mathbf{z}}^{(i)}$ is computed to minimize redundant motion representations across augmented views. In this way, the ISG converges toward a solution that simultaneously maximizes informative motion cues and suppresses redundant information. The block diagram of DLEO is shown in Fig.~\ref{fig:5}, and the proof of equivalence between M\textsuperscript{3}ISGM and DLEO is provided Appendix~\ref{ap:proof}.

In a self-supervised setting, the objective of encoder 1 (the normal-augmentation query encoder) is to maximize learning on the detailed features of the input sequence while minimizing redundant information shared with the extremely augmented data. Therefore, we formulate the loss function of encoder 1 as in Eq.~(\ref{eq:15}):
\begin{equation}
\label{eq:15}
\mathcal{L}_1 = \mathcal{L}_{\mathrm{Push}}(\hat{\mathbf{z}}^{(i)}, \bar{\mathbf{z}}^{(i)}) + \lambda \cdot \mathcal{MI}(\hat{\mathbf{z}}^{(i)}; \tilde{\mathbf{z}}^{(i)}),
\end{equation}
where $\mathcal{MI}(\hat{\mathbf{z}}^{(i)}; \tilde{\mathbf{z}}^{(i)})$ measures the discrepancy between the normally augmented representation $\hat{\mathbf{z}}^{(i)}$ and the extremely augmented representation $\tilde{\mathbf{z}}^{(i)}$ using KL divergence, $\lambda \ge 0$ is a redundancy-penalizing coefficient, and $\mathcal{L}_{\mathrm{Push}}(\cdot)$ is the contrastive InfoNCE loss defined as:
\begin{equation}
\label{eq:16}
\begin{aligned}
\mathcal{L}_{\mathrm{Push}}(\hat{\mathbf{z}}^{(i)}, \bar{\mathbf{z}}^{(i)}) &= \\
&\hspace{-3em}{-\mathrm{log}\frac{\mathrm{exp}(\hat{\mathbf{z}}^{(i)} \cdot \bar{\mathbf{z}}^{(i)} / \tau)}{\mathrm{exp}(\hat{\mathbf{z}}^{(i)} \cdot \bar{\mathbf{z}}^{(i)} / \tau) + \sum_{k=1}^{|\mathbf{M}|}\mathrm{exp}(\hat{\mathbf{z}}^{(i)} \cdot \mathbf{m}_k / \tau)}},
\end{aligned}
\end{equation}
where $\bar{\mathbf{z}}^{(i)}$ is the averaged representation of input sequence $\mathbf{X}^{(i)}$, $\mathbf{m}_k$ is the $k$-th negative sample from the memory bank $\mathbf{M}$, $\tau$ is the temperature hyperparameter, and $|\mathbf{M}|$ is the memory bank size.

Thus, for encoder 1, finding the equilibrium of the game reduces to solving the unconstrained optimization problem:
\begin{equation}
\label{eq:17}
\mathop{\min}_{\hat{\mathbf{z}}^{(i)}}\mathcal{L}_1.
\end{equation}

Similarly, encoder 2 (the extreme-augmentation query encoder) is designed to maximize learning on the global features of the skeleton sequence while minimizing redundant information shared with the normal-augmented data. Therefore, we define the loss function of encoder 2 as in Eq.~(\ref{eq:18}):
\begin{equation}
\label{eq:18}
\mathcal{L}_2 = \mathcal{L}_{\mathrm{Push}}(\tilde{\mathbf{z}}^{(i)}, \bar{\mathbf{z}}^{(i)}) + \lambda \cdot \mathcal{MI}(\tilde{\mathbf{z}}^{(i)}; \hat{\mathbf{z}}^{(i)}).
\end{equation}
Likewise, the corresponding optimization problem is:
\begin{equation}
\label{eq:19}
\mathop{\min}_{\tilde{\mathbf{z}}^{(i)}}\mathcal{L}_2.
\end{equation}
All physical quantities in Eqs.~{(\ref{eq:18})--(\ref{eq:19})} share identical meanings as described previously in Eqs.~{(\ref{eq:15})--(\ref{eq:17})}.

As mentioned above, the KL divergence is employed in this work to measure the discrepancy between the normal-augmented features and the extreme-augmented features. However, since the final extracted feature representations are still of high dimensionality, it is difficult to directly estimate their accurate distributions. Therefore, inspired by Wang and Qi \cite{wang2022contrastive}, we utilize the positive features output by the key encoder together with a large number of negative features stored in the memory bank $\mathbf{M}$ to approximate the conditional distributions of high-dimensional features. Specifically, the conditional distributions of the features are formulated as shown in Eqs.~(\ref{eq:20})--(\ref{eq:21}):
\begin{equation}
\label{eq:20}
p(\hat{\mathbf{z}}^{(i)}|\bar{\mathbf{z}}^{(i)}) = \frac{\mathrm{exp}(\hat{\mathbf{z}}^{(i)} \cdot \bar{\mathbf{z}}^{(i)} / \tau)}{\mathrm{exp}(\hat{\mathbf{z}}^{(i)} \cdot \bar{\mathbf{z}}^{(i)} / \tau) + \sum_{k=1}^{|\mathbf{M}|}\mathrm{exp}(\hat{\mathbf{z}}^{(i)} \cdot \mathbf{m}_k / \tau)},
\end{equation}
\begin{equation}
\label{eq:21}
p(\tilde{\mathbf{z}}^{(i)}|\bar{\mathbf{z}}^{(i)}) = \frac{\mathrm{exp}(\tilde{\mathbf{z}}^{(i)} \cdot \bar{\mathbf{z}}^{(i)} / \tau)}{\mathrm{exp}(\tilde{\mathbf{z}}^{(i)} \cdot \bar{\mathbf{z}}^{(i)} / \tau) + \sum_{k=1}^{|\mathbf{M}|}\mathrm{exp}(\tilde{\mathbf{z}}^{(i)} \cdot \mathbf{m}_k / \tau)},    
\end{equation}
where $|\mathbf{M}|$ denotes the size of the memory bank, $\hat{\mathbf{z}}^{(i)}$, $\tilde{\mathbf{z}}^{(i)}$, and $\bar{\mathbf{z}}^{(i)}$ represent the features extracted from the normally augmented data $\hat{\mathbf{X}}^{(i)}$, the extremely augmented data $\tilde{\mathbf{X}}^{(i)}$ and the average data $\bar{\mathbf{X}}^{(i)}$, respectively. $\tau$ is a temperature hyper-parameter. These distributions characterize the similarity between the current sample and the negative samples stored in the memory bank.

Thus, using the two conditional distributions defined above, the mutual information terms $\mathcal{MI}(\hat{\mathbf{z}}^{(i)}; \tilde{\mathbf{z}}^{(i)})$ and $\mathcal{MI}(\tilde{\mathbf{z}}^{(i)}; \hat{\mathbf{z}}^{(i)})$ can be computed as follows:
\begin{equation}
\label{eq:22}
\mathcal{MI}(\hat{\mathbf{z}}^{(i)}; \tilde{\mathbf{z}}^{(i)}) = 
D_{KL}(p(\hat{\mathbf{z}}^{(i)}|\bar{\mathbf{z}}^{(i)}) 
\lVert p(\tilde{\mathbf{z}}^{(i)}|\bar{\mathbf{z}}^{(i)})),
\end{equation}
\begin{equation}
\label{eq:23}
\mathcal{MI}(\tilde{\mathbf{z}}^{(i)}; \hat{\mathbf{z}}^{(i)}) = 
D_{KL}(p(\tilde{\mathbf{z}}^{(i)}|\bar{\mathbf{z}}^{(i)}) 
\lVert p(\hat{\mathbf{z}}^{(i)}|\bar{\mathbf{z}}^{(i)})),
\end{equation}
where $D_{KL}(\cdot \lVert \cdot)$ denotes the KL divergence between two sample distributions. Since both types of augmented data are used to train the model, the final loss must jointly consider $\mathcal{L}_1$ and $\mathcal{L}_2$. Therefore, the overall loss used in the proposed M\textsuperscript{3}GCLR network is defined as the arithmetic mean of $\mathcal{L}_1$ and $\mathcal{L}_2$, i.e.,
\begin{equation}
\label{eq:24}
\mathcal{L} = \frac{1}{2}(\mathcal{L}_1 + \mathcal{L}_2).
\end{equation}

DLEO explicitly drives the M\textsuperscript{2}G ($\Gamma$) to converge toward an equilibrium that maximizes the discrepancy between the detailed/global features and the average feature, while minimizing the redundant information between the normally and extremely augmented features. Built upon the convergence stability guaranteed by M\textsuperscript{2}G, DLEO enables more robust and discriminative feature learning.

\subsection{Pseudocode of the Algorithm}

The pseudocode of the proposed \textit{M\textsuperscript{3}GCLR network} is shown in Algorithm~\ref{alg:m3gclr}.

\begin{algorithm}[t]
\caption{M\textsuperscript{3}GCLR algorithm.}
\label{alg:m3gclr}
\begin{algorithmic}[1]

\State \textbf{Input:} 
\Statex \makebox[7.7em][l]{$\mathbf{X}$}          \texttt{\# input sequence}
\Statex \makebox[7.7em][l]{$\theta_{normal} \in [-\hat{\theta}, \hat{\theta}]$}  \texttt{\# normal rotation angle}
\Statex \makebox[7.7em][l]{$\theta_{extreme} \in [-\tilde{\theta}, \tilde{\theta}]$}  \texttt{\# extreme rotation angle}
\Statex \makebox[7.7em][l]{$B$}          \texttt{\# batch size}
\Statex \makebox[7.7em][l]{$E$}          \texttt{\# training iteration}
\Statex \makebox[7.7em][l]{$m$}          \texttt{\# momentum parameter}
\Statex \makebox[7.7em][l]{$\tau$}          \texttt{\# temperature}
\State \textbf{Onput:} 
\Statex \makebox[7.7em][l]{$\boldsymbol\uptheta$}          \texttt{\# weights after training}

\vspace{0.5em}
\State \textbf{Procedure:}
\State Initialize encoders and projection heads; copy parameters to momentum teacher branch
\Statex \hspace{0.0em}\texttt{\# initialization}
\State Initialize memory bank $Q \leftarrow \varnothing$
\Statex \hspace{0.0em}\texttt{\# initialize memory bank}

\For{each $i$ in $[1,E]$}
    \State Sample a batch of size $B$ from $\mathbf{X}$, denoted as $\mathbf{X}^{(i)}$.

    \Statex \hspace{1.4em}\texttt{\# =============MRAM==============}
    
    \State Generate normally augmented data:
    \Statex {%
    \centering
    $\hat{\mathbf{X}}^{(i)} = \mathbf{R}_{xyz}(\theta_{normal}) \cdot \mathbf{X}^{(i)}$
    \par
    }

    \State Generate extremely augmented data: \Statex {%
    \centering
    $\tilde{\mathbf{X}}^{(i)} = \mathbf{R}_{xyz}(\theta_{extreme}) \cdot \mathbf{X}^{(i)}$
    \par
    }

    \State Compute average data: $\bar{\mathbf{X}}^{(i)} = \frac{1}{B} \sum_{b=1}^{B} \mathbf{X}_b^{(i)}$

    \Statex \hspace{1.4em}\texttt{\# ==========M\textsuperscript{3}ISGM+DLEO==========}
    
    \State Normally augmented features: $f_{q\_1}(\hat{\mathbf{X}}^{(i)})$

    \State Extremely augmented features: $f_{q\_2}(\tilde{\mathbf{X}}^{(i)})$

    \State Average data features: $f_{k}(\bar{\mathbf{X}}^{(i)})$

    \State Normally augmented projections:
    \Statex {%
    \centering
    $\hat{\mathbf{z}}^{(i)} = g(f_{q\_1}(\hat{\mathbf{X}}^{(i)}))$
    \par}
    
    \State Extremely augmented projections: 
    \Statex {%
    \centering
    $\tilde{\mathbf{z}}^{(i)} = g(f_{q\_2}(\tilde{\mathbf{X}}^{(i)}))$
    \par}

    \State Average data projections: $\bar{\mathbf{z}}^{(i)} = g(f_k(\bar{\mathbf{X}}^{(i)}))$
    
    \State Compute Push loss: $\mathcal{L}_{\mathrm{Push}}(\hat{\mathbf{z}}^{(i)}, \bar{\mathbf{z}}^{(i)})$, $\mathcal{L}_{\mathrm{Push}}(\tilde{\mathbf{z}}^{(i)}, \bar{\mathbf{z}}^{(i)})$

    \State Compute KL divergence:
    \Statex {%
    \centering
    $\mathcal{MI}(\hat{\mathbf{z}}^{(i)}, \tilde{\mathbf{z}}^{(i)})$, $\mathcal{MI}(\tilde{\mathbf{z}}^{(i)}, \hat{\mathbf{z}}^{(i)})$
    \par}
    
    \State Compute overall loss: $\mathcal{L} = \frac{1}{2}(\mathcal{L}_1 + \mathcal{L}_2)$

    \State Update encoders parameters
\EndFor

\vspace{0.3em}

\end{algorithmic}
\end{algorithm}

\section{Experiments}
\subsection{Datasets}
This paper evaluates the feasibility and effectiveness of the proposed method M\textsuperscript{3}GCLR on two public human action datasets, NTU RGB+D~\cite{Amir2016NTU60, liu2019ntu120} and PKU-MMD~\cite{Liu2017PKUMMD}.

\subsubsection{NTU RGB+D} 
The NTU RGB+D dataset, released by Nanyang Technological University (NTU), is a large-scale benchmark for human action recognition using RGB+D data. For NTU RGB+D 60~\cite{Amir2016NTU60}, it contains 60 action classes and 56,578 sequences recorded by three Microsoft Kinect v2 cameras, including RGB videos, depth maps, skeleton data, and infrared sequences. Data were collected from 40 subjects under three camera views. Two protocols are provided: Cross-Subject (X-Sub), where subjects 1–20 are used for training and 21–40 for testing, and Cross-View (X-View), where two views are used for training and one for testing. An extended version, NTU RGB+D 120~\cite{liu2019ntu120}, was released in 2019, increasing the number of classes to 120 and the total samples to about 114,480. It involves 106 subjects and 32 camera setups, offering greater scene diversity and complexity. Similar to NTU60, it adopts Cross-Subject (X-Sub) and Cross-Setup (X-Set) protocols, making it one of the most challenging skeleton-based benchmarks.

\subsubsection{PKU-MMD} 
The PKU-MMD dataset~\cite{Liu2017PKUMMD}, released by the Wangxuan Institute of Peking University in 2017, provides a large-scale multimodal benchmark for human action recognition. It consists of two subsets, Part I and Part II. Part I includes 66 subjects performing 51 actions, producing 20,734 samples and 1,074 independent action instances. Under the C-view protocol, data from the middle and right cameras are used for training, and those from the left camera for testing. Part II contains more complex and dynamic motion scenes with stronger inter-person interactions and occlusions. It follows the same C-view protocol but focuses on large-scale movements and real-world complexity, making it a valuable dataset for evaluating model robustness and generalization in challenging environments.

\subsection{Experimental Settings}
The detailed system configuration of the proposed method is summarized in Table.~\ref{tab:config}.

\begin{table}[t]
\centering
\caption{System Configuration}
\label{tab:config}
\renewcommand{\arraystretch}{1.2}
\begin{tabular}{|c c|}
\hline
System Configuration & Parameters \\ \hline
CUDA Version & Cuda release V11.8.89 \\
Pytorch Version & Pytorch 1.13.0+cu117 \\
Python Version & 3.9.12 \\
GPU Model & Nvidia GeForce RTX3090 (24G) \\ \hline
\end{tabular}
\end{table}

\begin{figure}[t]
  \begin{center}
  \includegraphics[width=\columnwidth]{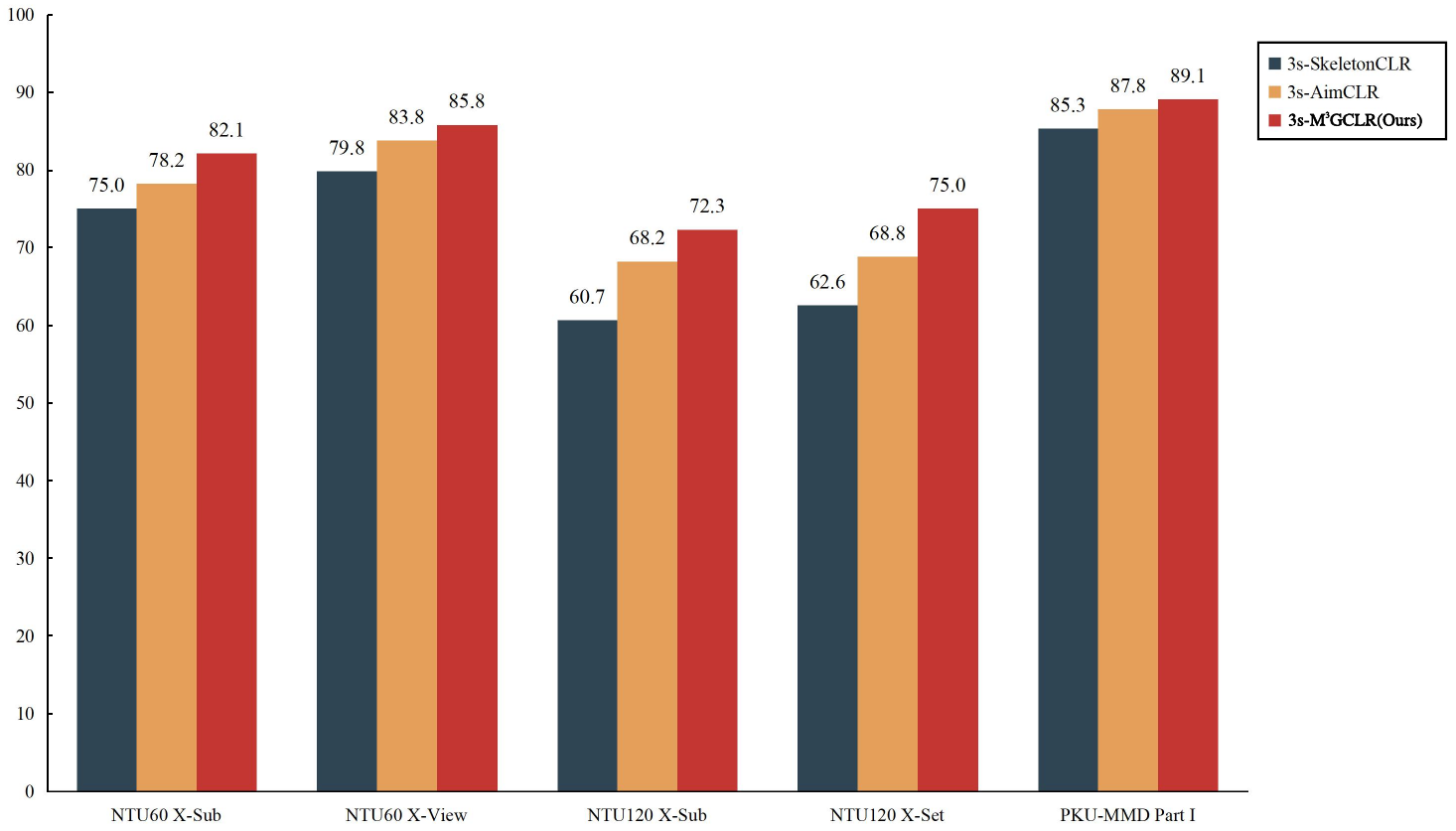}\\
  \caption{Comparison results of the 3s-baseline methods and 3s-M\textsuperscript{3}GCLR on NTU RGB+D 60, NTU RGB+D 120, and PKU-MMD Part I datasets under the linear evaluation protocol. 3s-M\textsuperscript{3}GCLR achieves consistently superior performance across all methods.}\label{fig:8}
  \end{center}
\end{figure}

\begin{figure}[t]
  \begin{center}
  \includegraphics[width=\columnwidth]{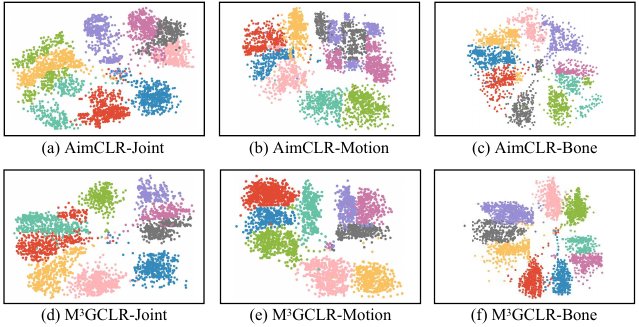}\\
  \caption{Feature distribution visualization of nine representative actions on the NTU RGB+D 60 dataset. The scatter maps show the encoded features after linear evaluation for AimCLR and M\textsuperscript{3}GCLR, where M\textsuperscript{3}GCLR exhibits more compact intra-class clustering and clearer inter-class separation.}\label{fig:6}
  \end{center}
\end{figure}

\begin{figure*}[!t]
  \begin{center}
  \includegraphics[width=\textwidth]{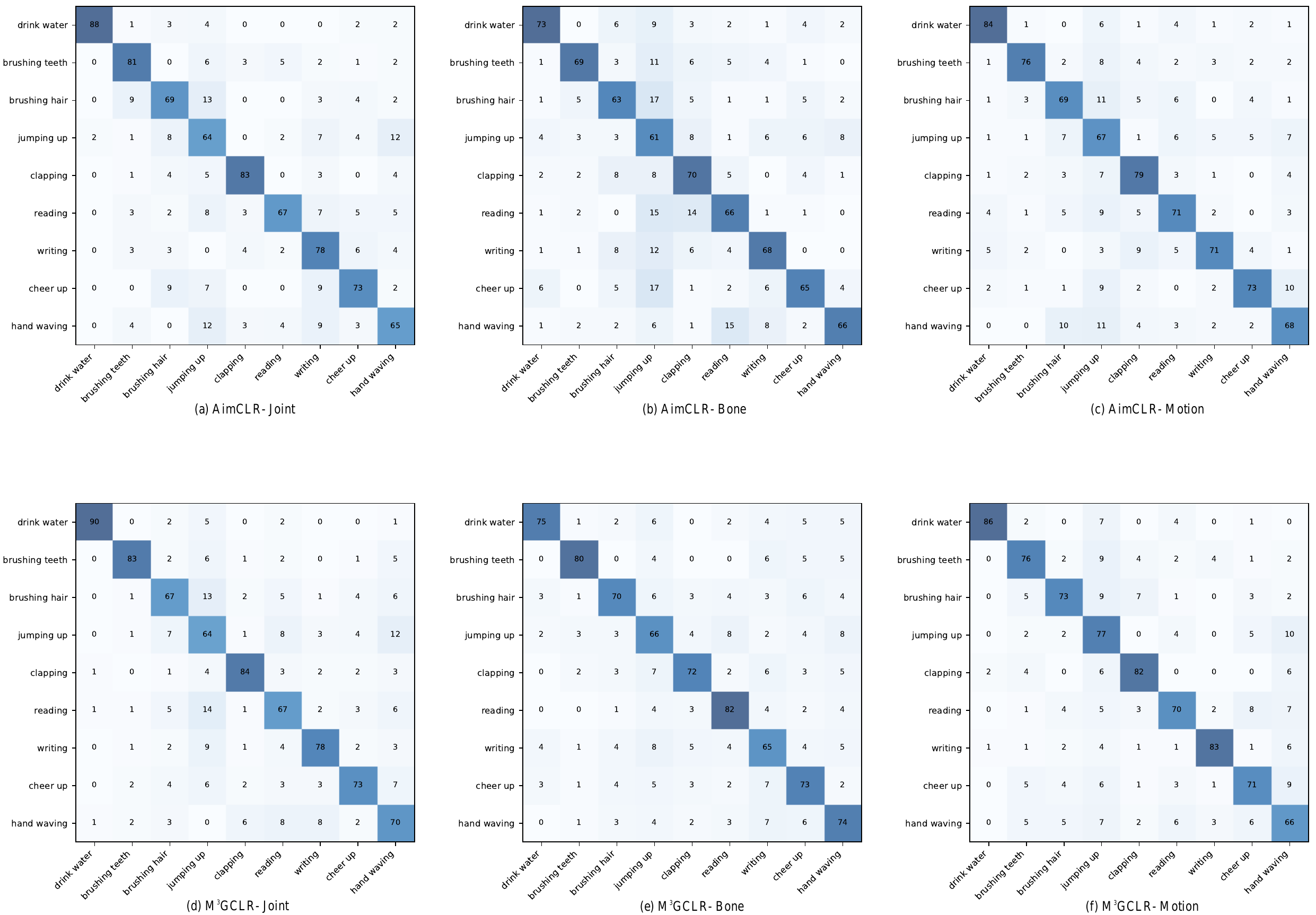}\\
  \caption{Confusion matrices of AimCLR and M\textsuperscript{3}GCLR under the linear evaluation protocol on the NTU RGB+D 60 dataset, visualized for the Joint, Motion, and Bone streams. Each matrix is computed from 100 samples per class across 9 actions. M\textsuperscript{3}GCLR consistently demonstrates higher recognition accuracy and reduced misclassification.}\label{fig:7}
  \end{center}
\end{figure*}

\begin{table*}[!t]
\centering
\caption{Comparison Results of the Proposed M\textsuperscript{3}GCLR and Baseline Methods on NTU RGB+D 60, NTU RGB+D 120, and PKU-MMD Datasets under the Linear Evaluation Protocol}
\label{tab:base}
\renewcommand{\arraystretch}{1.15}
\begin{tabular}{|ccccccc|}
\hline
\multirow{2}{*}{Method} & 
\multirow{2}{*}{Stream} & 
\multicolumn{2}{c}{NTU RGB+D 60 (\%)} & 
\multicolumn{2}{c}{NTU RGB+D 120 (\%)} & 
\multicolumn{1}{c|}{PKU-MMD (\%)} \\ \cline{3-7}
 & & X-Sub Acc.$\uparrow$ & X-View Acc.$\uparrow$ & X-Sub Acc.$\uparrow$ & X-Set Acc.$\uparrow$ & Part I Acc.$\uparrow$ \\ \hline

SkeletonCLR~\cite{Li2021CrossViewConsistency} & Joint & 68.3 & 76.4 & 56.8 & 55.9 & 80.9 \\
AimCLR~\cite{Guo2022AimCLR} & Joint & \underline{74.3} & \underline{79.6} & \underline{63.4} & \underline{63.4} & \underline{83.0} \\
\textbf{M\textsuperscript{3}GCLR (Ours)} & Joint & \textbf{77.5+3.2} & \textbf{81.9+2.3} & \textbf{68.3+4.9} & \textbf{70.1+6.7} & \textbf{83.2+0.2} \\ \hline

SkeletonCLR~\cite{Li2021CrossViewConsistency} & Motion & 53.3 & 50.8 & 39.6 & 40.2 & 63.4 \\
AimCLR~\cite{Guo2022AimCLR} & Motion & \underline{66.8} & \underline{70.6} & \underline{57.3} & \underline{54.4} & \underline{72.0} \\
\textbf{M\textsuperscript{3}GCLR (Ours)}  & Motion & \textbf{73.2+6.4} & \textbf{77.0+6.4} & \textbf{63.4+6.1} & \textbf{59.8+5.4} & \textbf{76.4+4.4} \\ \hline

SkeletonCLR~\cite{Li2021CrossViewConsistency} & Bone & 69.4 & 67.4 & 48.4 & 52.0 & 72.6 \\
AimCLR~\cite{Guo2022AimCLR} & Bone & \underline{73.2} & \underline{77.0} & \underline{62.9} & \underline{63.4} & \underline{82.0} \\
\textbf{M\textsuperscript{3}GCLR (Ours)}  & Bone & \textbf{76.1+2.9} & \textbf{78.9+1.9} & \textbf{65.8+2.9} & \textbf{64.2+0.8} & \textbf{85.2+3.2} \\ \hline

3s-SkeletonCLR~\cite{Li2021CrossViewConsistency} & 3s & 75.0 & 79.8 & 60.7 & 62.6 & 85.3 \\
3s-AimCLR~\cite{Guo2022AimCLR} & 3s & \underline{78.2} & \underline{83.8} & \underline{68.2} & \underline{68.8} & \underline{87.8} \\
\textbf{3s-M\textsuperscript{3}GCLR (Ours)}  & 3s & \textbf{82.1+3.9} & \textbf{85.8+2.0} & \textbf{72.3+4.1} & \textbf{75.0+6.2} & \textbf{89.1+1.3} \\ \hline

\end{tabular}
\end{table*}

The experiments are conducted on the NTU RGB+D 60 and PKU-MMD datasets for performance evaluation. To train the proposed network, all skeleton sequences are temporally downsampled to 50 frames. The encoder adopts ST-GCN~\cite{yan2018spatial} as the backbone network, with a hidden layer dimension of 256 and a feature dimension of 128. For the contrastive learning setting, we follow the configuration of the baseline method~\cite{Guo2022AimCLR}, where the projection heads for both contrastive learning and auxiliary tasks are implemented using multi-layer perceptrons (MLPs). For optimization, SGD with a momentum of 0.99 and a weight decay of 0.0001 is employed. The model is trained for 300 epochs, with an initial learning rate of 0.1, which is reduced to 0.01 at the 250th epoch. In addition, three streams of skeleton sequences are generated, including joint, bone, and motion streams. For all reported three-stream results, we adopt the same weighted fusion strategy as other multi-stream GCN-based methods~\cite{Guo2022AimCLR, Guo2024AimCLRPlus, Li2021CrossViewConsistency, Wang2023SS3D, Hua2023PartAwareCLR, Xu2021CapsuleAE, Chen2025VLSkeleton, Gao2023EfficientSTCL, zhang2023hierarchical}, with fusion weights set to [0.6, 0.6, 0.4].

\subsection{Performance Evaluation}
\subsubsection{Comparison with Baseline Methods}
We evaluate the proposed M\textsuperscript{3}GCLR under the linear evaluation protocol and conduct comparative experiments with representative baseline methods. M\textsuperscript{3}GCLR is compared with SkeletonCLR~\cite{Thoker2021SkeletonContrastive} and AimCLR~\cite{Guo2022AimCLR}, where the bolded entries in the result tables denote the best performance, and the underlined entries indicate the second-best results. As shown by the experimental results, M\textsuperscript{3}GCLR consistently outperforms the baseline methods across all evaluation metrics on both datasets.

As reported in Table~\ref{tab:base}, on the NTU RGB+D 60 dataset, the proposed method achieves significant improvements over existing baselines under both the single-stream (Joint, Bone and Motion) and three-stream (Joint+Bone+Motion) settings. For example, under the X-View protocol, the Joint-stream model attains an accuracy of 81.9\%, surpassing SkeletonCLR (76.4\%) and AimCLR (79.6\%) by 5.5\% and 2.3\%, respectively. This performance gain mainly attributed to the complementary effect of muti-view data augmentation. By jointly employing normal-augmentation rotation angle (\textpm15\textdegree) and extreme-augmentation rotation angle (\textpm60\textdegree), the model is able to simultaneously capture fine-grained local motion details (e.g., subtle finger movements) and global posture variations (e.g., body center-of-mass shifts), thereby enhancing feature representation capability. Moreover, the game-theoretic mechanism maximizes the mutual information between different views and static anchor points while minimizing redundancy across views, effectively suppressing noise interference. The mutual-information-driven optimization further encourages the model to focus on intrinsic action-related characteristics during training, rather than view-dependent noise. Additional experiments on the NTU RGB+D 120 and PKU-MMD Part I dataset show that our method consistently outperforms baseline methods, providing further validation of its effectiveness. 

Furthermore, in the three-stream training setting, the proposed method achieves accuracies of 85.8\%/82.1\% on NTU RGB+D 60 X-View/X-Sub, 75.0\%/72.3\% on NTU RGB+D 120 X-Set/X-Sub, and 89.1\% on PKU-MMD Part I, respectively. All results show consistent improvements over their single-stream counterparts. These results indicate that the collaborative effect of multi-modal information is effective: Bone data describe spatial relationships among joints, Motion data capture temporal dynamics, and both complement Joint data. The results also demonstrate the multi-stream adaptability of the game mechanism, showing that the min–max game framework remains robust in multi-stream scenarios. 

To visualize the effectiveness of the proposed method, we conducted the following visualization experiments. First, to intuitively demonstrate the effectiveness of the proposed method, we compared the linear evaluation results under the three-stream training setting with those of the baseline methods in a unified bar chart, as shown in Fig.~\ref{fig:8}. Next, we selected nine actions from the NTU RGB+D 60 dataset—drink water, brushing teeth, brushing hair, jumping up, clapping, reading, writing, cheer up, and hand waving—and plotted their t-SNE scatter distributions, as shown in Fig.~\ref{fig:6}. Finally, we randomly selected 100 samples for each of the nine actions and fed them into the trained model to predict their action categories. Based on the prediction results, a confusion matrix was generated, as illustrated in Fig.~\ref{fig:7}. 

\begin{table*}[!t]
\centering
\caption{Comparison Results of the Proposed M\textsuperscript{3}GCLR and Other Approaches on the NTU RGB+D 60 and NTU RGB+D 120 Datasets under the Linear Evaluation Protocol}
\label{tab:ntu}
\renewcommand{\arraystretch}{1.15}
\begin{tabular}{|c c c c c c|}
\hline
\multirow{2}{*}{Method} & 
\multirow{2}{*}{Backbone} & 
\multicolumn{2}{c}{NTU RGB+D 60 (\%)} & 
\multicolumn{2}{c|}{NTU RGB+D 120 (\%)} \\ \cline{3-6}
 & & X-Sub Acc.$\uparrow$ & X-View Acc.$\uparrow$ & X-Sub Acc.$\uparrow$ & X-Set Acc.$\uparrow$ \\ \hline
CorsSCLR (2021) \cite{Li2021CrossViewConsistency} & ST-GCN & 72.9 & 79.9 & 67.9 & 66.7 \\
MCAE (2021) \cite{Xu2021CapsuleAE} & MCAE & 65.6 & 74.7 & 52.8 & 54.7 \\
CP-STN (2021) \cite{Chen2025VLSkeleton} & ST-GCN & 69.4 & 76.6 & 55.7 & 54.7 \\
ST-CL (2022) \cite{Gao2023EfficientSTCL} & GCN & 68.1 & 69.4 & 54.2 & 55.6 \\
AimCLR (2022) \cite{Guo2022AimCLR} & ST-GCN & 78.2 & 83.8 & 68.2 & 68.8 \\
SkeAttnCLR (2023) \cite{Hua2023PartAwareCLR} & ST-GCN & 78.2 & 82.6 & 65.5 & 65.7 \\
ASAR (2023) \cite{Wang2023SS3D} & ST-GCN & 75.6 & 80.7 & 62.5 & 62.8 \\
HiCLR (2023) \cite{Dong2023HiCLR} & ST-GCN & 80.4 & 85.5 & 70.0 & 70.4 \\
ViA (2024) \cite{yang2024view} & GCN & 78.1 & \underline{85.8} & 69.2 & 66.9 \\
CMCS (2024) \cite{liu2024cross} & ST-GCN & 78.6 & 84.5 & 68.5 & 71.1 \\
AimCLR++ (2024) \cite{Guo2024AimCLRPlus} & ST-GCN & 80.9 & 85.4 & \underline{70.1} & 71.2 \\
ISSM (2025) \cite{tu2025issm} & bi-GRU & \underline{81.1} & \textbf{86.7} & 69.9 & \underline{73.2} \\
\textbf{M\textsuperscript{3}GCLR (Ours)} & ST-GCN & \textbf{82.1} & 85.5 & \textbf{72.3} & \textbf{75.0} \\ \hline
\end{tabular}
\end{table*}

\begin{table*}[!t]
\centering
\caption{Comparison Results of the Proposed M\textsuperscript{3}GCLR and Other Approaches on the PKU-MMD Part I and Part II Dataset under the Linear Evaluation Protocol}
\label{tab:pkummd}
\renewcommand{\arraystretch}{1.15}
\begin{tabular}{|c c c c|}
\hline
\multirow{2}{*}{Method} & 
\multirow{2}{*}{Backbone} & 
\multicolumn{2}{c|}{PKU-MMD (\%)} \\ \cline{3-4}
 &  & Part I Acc.$\uparrow$ & Part II Acc.$\uparrow$ \\ \hline
CorsSCLR (2021) \cite{Li2021CrossViewConsistency} & ST-GCN & 84.9 & 21.2 \\
AimCLR (2022) \cite{Guo2022AimCLR} & ST-GCN & 87.8 & 38.5 \\
SkeAttnCLR (2023) \cite{Hua2023PartAwareCLR} & ST-GCN & 83.8 & --- \\
ASAR (2023) \cite{Wang2023SS3D} & ST-GCN & 83.5 & 38.8 \\
CMCS (2024) \cite{liu2024cross} & ST-GCN & 88.1 & 39.6 \\
AimCLR++ (2024) \cite{Guo2024AimCLRPlus} & ST-GCN & \textbf{90.4} & \underline{41.2} \\ 
ISSM (2025) \cite{tu2025issm} & bi-GRU & \underline{89.6} & 40.3 \\
\textbf{M\textsuperscript{3}GCLR (Ours)} & ST-GCN & 89.1 & \textbf{45.2} \\ \hline
\end{tabular}
\end{table*}

\subsubsection{Comparison with State-of-the-Art Methods}
To further validate the effectiveness of the proposed model, we conduct comparative experiments with other self-supervised skeleton-based action recognition methods under the linear evaluation protocol. The results are reported in Tables~\ref{tab:ntu} and~\ref{tab:pkummd}, respectively. In the tables, the algorithm name in bold denotes the proposed method, and the bold metrics and underlined metrics respectively indicate the best and the second-best performance within each table. The year in parentheses represents the publication time of the corresponding method.

From the results in Table~\ref{tab:ntu}, it can be observed that M\textsuperscript{3}GCLR remains highly competitive with recent approaches on both the NTU RGB+D 60, NTU RGB+D 120 and PKU-MMD datasets. The core advantage lies in the introduction of the ISG, which explicitly disentangles view-related noise from intrinsic action representations. In contrast, existing methods mainly rely on standard and extreme data augmentations, which may lead to less comprehensive feature learning and lack finer theoretical constraints.

In Table~\ref{tab:pkummd}, PKU-MMD Part I is a relatively small and simple dataset. As a result, in this setting, the gap between the averaged data computed by M\textsuperscript{3}GCLR and the augmented data is limited, and the discriminative action information that can be extracted is therefore relatively constrained. This leads to performance that is slightly inferior to AimCLR++. Nevertheless, even under such disadvantageous conditions, M\textsuperscript{3}GCLR remains close to the best-performing method and still substantially outperforms earlier approaches, which further demonstrates its superiority. The training results on PKU-MMD Part II show that the proposed ISG plays a significant role in breaking the upper bound of model performance, thereby validating the effectiveness of our method.

\subsection{Ablation Study}
M\textsuperscript{3}GCLR integrates MRAM, M\textsuperscript{3}ISGM, and DLEO. To evaluate the contribution of each component, we design a progressive ablation study.

Table~\ref{tab:abs} shows the influence of different augmentation strategies on model performance. With only normal augmentation (74.9\%), small-angle rotations preserve local details but lack viewpoint diversity, making the model sensitive to global posture variations. With only multi-view augmentation (71.2\%), large-angle rotations increase diversity but excessively disrupt action continuity (e.g., the hand trajectory in waving becomes fragmented), resulting in semantic inconsistency among positive samples. When combining normal+multi-view-rotation augmentation (77.4\%), the complementarity between local and global features is partially exploited; however, redundant information is not explicitly constrained, leading to limited improvement. After introducing the mutual information constraint (82.1\%), the ISG minimizes inter-view mutual information, enabling the model to effectively filter redundant features (e.g., noise from background joints), and the accuracy improves significantly by 4.7\%. These results suggest that MRAM can achieve substantial performance gains only when collaborating with M\textsuperscript{3}ISGM and DLEO.

\begin{table*}[!t]
\centering
\caption{Model-Wise Ablation Results of M\textsuperscript{3}GCLR on the NTU RGB+D 60 Dataset (Linear Evaluation)}
\label{tab:abs}
\renewcommand{\arraystretch}{1.15}
\begin{tabular}{|c c c c c|}
\hline
\multirow{2}{*}{Normal Augmentation} & 
\multirow{2}{*}{MRAM} & 
\multirow{2}{*}{M\textsuperscript{3}ISGM \& DLEO} & 
\multicolumn{2}{c|}{NTU RGB+D 60 (\%)} \\ \cline{4-5}
 &  &  & X-Sub Acc.$\uparrow$ & X-View Acc.$\uparrow$ \\ \hline
\checkmark &   &   & 74.9 & 79.5 \\
 & \checkmark  &   & 71.2 & 77.1 \\
\checkmark & \checkmark &   & \underline{77.4} & \underline{82.5} \\
\checkmark & \checkmark & \checkmark & \textbf{82.1} & \textbf{85.5} \\ \hline
\end{tabular}
\end{table*}

\begin{table*}[!t]
\centering
\caption{Ablation Results of the Three Streams under Different Rotation Angle Ranges on the NTU RGB+D 60 X-Sub Dataset (Linear Evaluation)}
\label{tab:angle}
\renewcommand{\arraystretch}{1.15}
\begin{tabular}{|c c c c c|}
\hline
Rotation Angle Range (\textdegree) & [--15, 15] & [--30, 30] & [--60, 60] & [--120, 120] \\ \hline
Normal & \textbf{82.1} & \underline{80.2} & 75.4 & 75.3 \\ 
Extreme & 70.1 & 71.3 & \textbf{82.1} & \underline{78.1} \\ \hline
\end{tabular}
\end{table*}

Furthermore, to investigate the effects of rotation angles and scales, as well as how different multi-view settings influence the learning behavior of M\textsuperscript{3}GCLR, we conduct parameter studies, as reported in Table~\ref{tab:angle}. We first fix the angle of normal rotation at [--15\textdegree, 15\textdegree] and examine the impact of different extreme-view rotations along the x-, y-, and z-axes. Then we fix the angle of extreme rotation at [--60\textdegree, 60\textdegree] and examine the impact of different normal-view rotations along the x-, y-, and z-axes.

It can be observed that when the extreme rotation angle is identical to the normal rotation angle, the additional multi-view information cannot be effectively exploited. As a result, the performance remains close to that of the baseline, while extra computational cost is introduced. As the extreme rotation magnitude increases, the model gradually benefits from the mutual information across different views, leading to moderate performance improvement. When the normal and extreme rotation ranges are set to [--15\textdegree, 15\textdegree] and [--60\textdegree, 60\textdegree], respectively, the model achieves favorable results. However, if the rotation range is further enlarged, excessive transformations lead to feature loss, and the performance begins to deteriorate. The sensitivity analysis of extreme rotation angles suggests that moderate rotational augmentation can simulate realistic viewpoint variations (e.g., top or side views) while preserving semantic integrity. In contrast, overly large rotations distort joint coordinates (for example, causing overlaps between arms and the torso), disrupt spatiotemporal continuity, and produce misleading positive samples. This observation motivates future investigation into appropriate threshold ranges that yield optimal performance.


\section{Conclusion}

This paper addresses the common challenges in cross-view skeleton-based action recognition, including view sensitivity, uncontrollable augmentation perturbations, and the lack of effective adversarial mechanisms. Grounded in the existence theory of game-theoretic equilibrium and the mini-max principle, we formulate a mini-max Infinite Skeleton-data Game (ISG) and, based on this formulation, propose a contrastive learning framework termed M\textsuperscript{3}GCLR (Multi-view Mini-Max infinite skeleton-data game Contrastive Learning for skeleton-based action Recognition). First, we design a Multi-view Rotation-based Augmentation Module (MRAM) to enhance the discriminative capability of M\textsuperscript{3}GCLR and improve its adaptability to viewpoint variations. Second, we introduce a Mutual-information-based Mini-Max skeleton-data Game module (M\textsuperscript{3}ISGM), which maximizes the semantic consistency between individual samples and the mean representation while reducing discrepancies across multi-view data, thereby providing a strong adversarial constraint. Building upon this, we further develop a Dual-Loss-based Equilibrium Optimizer (DLEO) to facilitate the convergence toward game equilibrium and improve feature separability. To evaluate the effectiveness of M\textsuperscript{3}GCLR, we conduct extensive experiments on NTU RGB+D 60, NTU RGB+D 120, and PKU-MMD Part I \& II. The results show that our method not only outperforms baseline approaches but also achieves competitive performance against recent state-of-the-art methods. In addition, ablation studies verify the effectiveness of MRAM, M\textsuperscript{3}ISGM, and DLEO, demonstrating the superiority and generality of the proposed framework in enhancing view robustness.

\clearpage

\appendix
\setcounter{equation}{0}
\renewcommand{\theequation}{A-\arabic{equation}}
\setcounter{table}{0}
\renewcommand{\thetable}{A-\Roman{table}}
\setcounter{figure}{0}
\renewcommand{\thefigure}{A-\arabic{figure}}

\renewcommand{\theHequation}{A.\arabic{equation}}
\renewcommand{\theHtable}{A.\Roman{table}}
\renewcommand{\theHfigure}{A.\Roman{figure}}

\section*{Proof of Theorems}
\label{ap:proof}

We first list all significant mathematical symbols in Appendix in Table~\ref{tab:symbols_a}.
\begin{table*}[!t]
\centering
\caption{Significant Mathematical Symbols in Appendix}
\label{tab:symbols_a}
\renewcommand{\arraystretch}{1.25}

\begingroup
\setlength{\parindent}{0pt}

\begin{tabular}{|>{\centering\arraybackslash}m{3.7cm}|>{\justifying\arraybackslash}m{10.2cm}|}
\hline
\multicolumn{1}{|c|}{Symbol} & \multicolumn{1}{c|}{Description} \\
\hline
$\Delta(\Omega)$ &
\noindent The set of all probability distributions over the sample space $\Omega$. \\

$\times$ (or $\times_{i \in N}$) &
\noindent The Cartesian product of two or more sets, representing all possible ordered tuples. Example: if $A=\{1\}$ and $B=\{2,3\}$, then $A \times B = \{(1,2),(1,3)\}$. \\

$\Gamma = (N,(C_i)_{i\in N},(u_i)_{i\in N})$ &
\noindent A strategic-form game $\Gamma$, where $N$ is the set of players, $C_i$ is the strategy set of player $i$, and $u_i$ is the utility (or payoff) function of player $i$. \\

$[\cdot]$ &
\noindent Pure strategy in a strategic game. \\

$-j$ &
\noindent The remaining set after removing element $j$ from $N$, i.e., $-j = N\setminus\{j\}$.
Example: if $N=\{1,2,3\}$, then $-2=\{1,3\}$. \\

$\sigma_i$ &
\noindent A randomized strategy of player $i$. Example: if $C_1=\{[a_1],[b_1]\}$, then $\Delta(C_1)=\{p[a_1] + (1-p)[b_1] \mid 0\le p\le1\}$,
and $\sigma_1 = 0.5[a_1] + 0.5[b_1]$. \\

$\sigma_i(c_i)$ &
\noindent The probability assigned to pure strategy $c_i$ by randomized strategy $\sigma_i$. Example: $\sigma_1([a_1]) = 0.5$. \\

$\sigma_S$ &
\noindent The joint randomized strategy profile of all players in subset $S\subset N$. When $S=N$, the subscript can be omitted.
Example: $\sigma_1 = 0.5[a_1] + 0.5[b_1]$, $\sigma_2 = [a_2]$,
then $\sigma_{\{1,2\}} = (\sigma_1, \sigma_2)$. \\

$I(\cdot; \cdot)$ & \noindent The average mutual information between two samples. \\

$d(\cdot, \cdot)$ & \noindent The Euclid distance between two points. \\
\hline
\end{tabular}

\endgroup
\end{table*}

\begin{proof}[Proof of Theorem 1]
\label{pr:th1}
To proof Theorem~\ref{the:isg}, we should introduce a supporting lemma.

\begin{lemma}[Kakutani Fixed Point Theorem~\cite{kakutani1941}]
\label{le:kakutani}
Let $\mathbf{S}$ be a nonempty, convex, and compact subset of a finite-dimensional vector space. If a mapping $F: \mathbf{S} \mapsto \mathbf{S}$ is upper hemicontinuous, then there exists at least one point $\bar{\mathbf{x}} \in \mathbf{S}$ such that $\bar{\mathbf{x}} \in F(\bar{\mathbf{x}})$. Here, an upper hemicontinuous mapping $F: \mathbf{X} \mapsto \mathbf{Y}$ satisfies the following condition: whenever (a) for all positive integers $j$, $\mathbf{x}(j) \in \mathbf{X}$, $\mathbf{y}(j) \in F(\mathbf{x}(j))$; (b) $\mathbf{x}(j) \xrightarrow{j \to \infty} \bar{\mathbf{x}}$; (c) $\mathbf{y}(j) \xrightarrow{j \to \infty} \bar{\mathbf{y}}$, it follows that $\bar{\mathbf{y}} \in F(\bar{\mathbf{x}})$.
\end{lemma}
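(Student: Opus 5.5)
The lemma, as quoted, omits two hypotheses that are standard in Kakutani's theorem and that the argument needs: $F(\mathbf{x})$ must be \emph{nonempty} and \emph{convex} for every $\mathbf{x}\in\mathbf{S}$. Without them the statement is false. For example, on $\mathbf{S}=[0,1]$ let $F(x)=\{1\}$ for $x<1/2$, $F(x)=\{0\}$ for $x>1/2$, and $F(1/2)=\{0,1\}$. This map has a closed graph but no fixed point. So I will assume both properties, and in the application to Theorem~\ref{the:isg} they must be checked for the best-response map. The closed-graph condition (a)--(c) in the lemma plays the role of upper hemicontinuity. I will use Brouwer's fixed point theorem as a black box: every continuous self-map of a nonempty compact convex set in $\mathbb{R}^n$ has a fixed point.

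\textbf{Step 1: $\mathbf{S}$ a simplex.} Assume first that $\mathbf{S}$ is an $n$-simplex.
\begin{enumerate}
\item For each $k$, take a triangulation of $\mathbf{S}$ of mesh less than $1/k$ (iterated barycentric subdivision).
\item At every vertex $\mathbf{v}$ of the triangulation, pick one point $\mathbf{y}(\mathbf{v})\in F(\mathbf{v})$; this uses nonemptiness.
\item Extend these choices affinely on each simplex of the triangulation. This gives a continuous map $f_k:\mathbf{S}\to\mathbf{S}$; convexity of $\mathbf{S}$ keeps the image inside $\mathbf{S}$.
\item Brouwer gives a point $\mathbf{x}_k=f_k(\mathbf{x}_k)$. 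If $\mathbf{x}_k$ lies in the cell with vertices $\mathbf{v}^k_0,\dots,\mathbf{v}^k_n$, then
\[
\mathbf{x}_k=\sum_{i=0}^n\lambda^k_i\,\mathbf{y}(\mathbf{v}^k_i),\qquad \lambda^k\in\Delta_n .
\]
\end{enumerate}

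\textbf{Step 2: passing to the limit.} By compactness of $\mathbf{S}$, of the simplex $\Delta_n$, and of $\mathbf{S}^{n+1}$, I extract a subsequence along which $\mathbf{x}_k\to\bar{\mathbf{x}}$, $\lambda^k\to\bar\lambda$, and $\mathbf{y}(\mathbf{v}^k_i)\to\bar{\mathbf{y}}_i$ for each $i$.
\begin{itemize}
\item Since the mesh tends to $0$, every vertex satisfies $\mathbf{v}^k_i\to\bar{\mathbf{x}}$.
\item The closed-graph condition (a)--(c), applied to the pairs $(\mathbf{v}^k_i,\mathbf{y}(\mathbf{v}^k_i))$, gives $\bar{\mathbf{y}}_i\in F(\bar{\mathbf{x}})$.
\item Passing to the limit in the convex combination gives $\bar{\mathbf{x}}=\sum_i\bar\lambda_i\bar{\mathbf{y}}_i$. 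Convexity of $F(\bar{\mathbf{x}})$ then yields $\bar{\mathbf{x}}\in F(\bar{\mathbf{x}})$.
\end{itemize}

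\textbf{Step 3: general $\mathbf{S}$.} Enclose $\mathbf{S}$ in a large simplex $\mathbf{\Sigma}$ and let $r:\mathbf{\Sigma}\to\mathbf{S}$ be the nearest-point projection. Because $\mathbf{S}$ is closed and convex, $r$ is well defined and continuous, and it is the identity on $\mathbf{S}$. Define $G=F\circ r$ on $\mathbf{\Sigma}$.
\begin{itemize}
\item $G$ has nonempty convex values.
\item $G$ has a closed graph, by continuity of $r$ together with the closed graph of $F$.
\item $G$ maps into $\mathbf{S}\subset\mathbf{\Sigma}$.
\end{itemize}
Step 1 gives $\bar{\mathbf{x}}\in G(\bar{\mathbf{x}})\subset\mathbf{S}$. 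Hence $r(\bar{\mathbf{x}})=\bar{\mathbf{x}}$, and so $\bar{\mathbf{x}}\in F(\bar{\mathbf{x}})$.

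\textbf{Main obstacle.} Granting Brouwer, the delicate point is Step 2. It must be the vertices $\mathbf{v}^k_i$, not $\mathbf{x}_k$ itself, that are fed into the closed-graph condition, and the convexity of $F(\bar{\mathbf{x}})$ is exactly what closes the argument. In the intended use for Theorem~\ref{the:isg}, the corresponding burden is verifying that the best-response correspondence has nonempty convex values. Nonemptiness should follow from continuity of the polynomial-in-mutual-information utilities on the compact sets $\boldsymbol{\Theta}_i$. Convexity is the problematic part: it needs quasi-concavity of $u_i$ in the player's own strategy. Since $\phi$ is only assumed continuous, that is naturally obtained by working with mixed strategies in $\Delta(C_i)$ rather than directly with $\boldsymbol{\uptheta}_i$.
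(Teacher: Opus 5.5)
The paper gives no proof of this lemma. It imports the result from Kakutani~\cite{kakutani1941} and uses it as a black box in the proof of Theorem~\ref{the:isg}, so there is no internal argument to compare against.

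Your proof is correct, and it is essentially Kakutani's original argument:
\begin{itemize}
\item the simplex case, via iterated barycentric subdivision, a piecewise-affine selection and Brouwer;
\item a limit taken at the vertices $\mathbf{v}^k_i$, where the closed-graph condition can be applied, rather than at $\mathbf{x}_k$;
\item a reduction of a general compact convex $\mathbf{S}$ to an enclosing simplex through the nearest-point retraction.
\end{itemize}

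What your version adds over the citation is a correct statement. As printed, the lemma omits that $F(\mathbf{x})$ must be nonempty and convex. Your example on $[0,1]$ (closed graph, $F(1/2)=\{0,1\}$, no fixed point) shows the printed version is false for set-valued $F$; so does the trivial $F\equiv\varnothing$. Set-valued is the only reading that fits the paper, since the lemma is applied to the best-response correspondence $R$.

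Your argument also shows exactly where each hypothesis enters:
\begin{itemize}
\item nonemptiness, in choosing $\mathbf{y}(\mathbf{v})$;
\item the closed graph, at the vertices;
\item convexity of $F(\bar{\mathbf{x}})$, in the last line;
\item closedness and convexity of $\mathbf{S}$, in the retraction.
\end{itemize}
Hence the proof of Theorem~\ref{the:isg} owes a verification that $R$ has convex values. The paper only gestures at this with the phrase ``defined by linear equalities.''

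One caution on your closing remark about the application. Passing to mixed strategies gives convex best-response sets only when payoffs are expectations of pure-strategy payoffs, and hence linear in the player's own $\sigma_i$. The paper's transferred payoff $v_i(\sigma)=u_i(\boldsymbol{\uptheta})$, with $\sigma_i=\phi(\boldsymbol{\uptheta}_i)$, is not of that form. Moreover, if $C_i$ is an infinite compact set, $\Delta(C_i)$ does not lie in a finite-dimensional space, so this lemma no longer applies and one needs the Fan--Glicksberg extension. Neither point affects your proof of the lemma itself.
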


We first prove that when the utility functions $u_i$ of the strategic-form game $\Gamma = (N,(C_i)_{i \in N},(u_i)_{i \in N})$ are continuous and each $C_i$ is compact and bounded, an equilibrium of $\Gamma$ exists.

Since each $C_i$ is compact and bounded, by the mathematical properties of probability distribution sets and Cartesian products, $\times_{i \in N} \Delta(C_i)$ is also compact and bounded. Moreover, as $\Delta(C_i)$ is convex and the Cartesian product of convex sets preserves convexity, $\times_{i \in N} \Delta(C_i)$ is also convex. For any randomized strategy profile $\sigma \in \times_{i \in N} \Delta(C_i)$ and any player $j \in N$, define:
\begin{equation}
\label{eq:a-1}
R_j(\sigma_{-j}) = \mathop{\arg\max}_{\tau_j \in \Delta(C_j)} u_j(\tau_j, \sigma_{-j}).
\end{equation}
Thus, the set $R_j(\sigma_{-j})$ represents the best response set for player $j$ in $\Delta(C_j)$ with respect to all other players $N \setminus \{j\}$. According to the definition of Nash equilibrium (see Eq.~(\ref{eq:2}) in the main text), the set $R_j(\sigma_{-j})$ can be expressed as a probability distribution set, where
\begin{equation}
\label{eq:a-2}
r_j(c_j) = 0 \Leftrightarrow c_j \notin \mathop{\arg\max}_{d_j \in C_j}u_j(\sigma_{-j}, d_j).
\end{equation}
Here, $r_j \in R_j(\sigma_{-j})$. This implies that $R_j(\sigma_{-j})$ is a subset of $\Delta(C_j)$ defined by linear equalities. Based on fundamental logical reasoning, if $c_j \in R_j(\sigma_{-j})$, then it must hold that
\begin{equation}
\label{eq:a-3}
c_j \in \mathop{\arg\max}_{d_j \in C_j}u_j(\sigma_{-j}, d_j).
\end{equation}
Since that utility function $u_j(\cdot)$ is continuous and its domain is compact, it follows that: $\mathop{\arg\max}_{d_j \in C_j} u_j(\sigma_{-j}, d_j) \neq \varnothing$. Consequently, by the definition in Eq.~(\ref{eq:a-3}), such an element $c_j$ must exist. Therefore, the best-response set $R_j(\sigma_{-j})$ contains at least one element, i.e., $R_j(\sigma_{-j}) \neq \varnothing$. As a result, the set $\times_{i \in N} \Delta(C_i)$ is nonempty. Hence, the set $\times_{i \in N} \Delta(C_i)$ satisfies the properties of being nonempty, convex, bounded, and closed.

Next, define $R(\sigma) = \times_{i \in N} R_i(\sigma_{-i})$. According to the definition of Nash equilibrium, $R(\sigma)$ represents the set of all Nash equilibria of the game $\Gamma$. If there exists a convergent sequence of randomized stratege profiles $\{\sigma^{k}\}$, where for any $k$, $\sigma^k \in \times_{i \in N} \Delta(C_i)$, such that $\tau^k \in R(\sigma^k)$ (which must exist since $R(\sigma^k)$ is nonempty), then by the continuity of the utility function $u_i(\cdot)$, one can always extract a convergent subsequence ${\tau^k}$ satisfying:
\begin{equation}
\label{eq:a-4}
\bar\tau = \lim_{k \rightarrow \infty} \tau^k,
\end{equation}
and
\begin{equation}
\label{eq:a-5}
\bar\sigma = \lim_{k \rightarrow \infty} \sigma^k.
\end{equation}

Note that $\tau^k$ is a sequence of Nash equilibria of the game $\Gamma$. Therefore, for any $\rho_i \in \Delta(C_i)$, the following inequality holds:
\begin{equation}
\label{eq:a-6}
u_i(\sigma_{-i}^k, \tau_i^k) \ge u_i(\sigma_{-i}^k, \rho_i).
\end{equation}
By the preservation property of limits, it follows that
\begin{equation}
\label{eq:a-7}
u_i(\bar{\sigma}_{-i}, \bar{\tau}_i) \ge u_i(\bar{\sigma}_{-i}, \rho_i).
\end{equation}
Hence, $\bar{\tau} = (\bar{\tau}_i)_{i \in N}$ is a Nash equilibrium of the game $\Gamma$, which implies $\bar{\tau} \in R(\bar\sigma)$. As a result, the correspondence $R(\sigma)$ is upper hemicontinuous.

Therefore, by the lemma~\ref{le:kakutani}, there exists a $\sigma \in \times_{i \in N} \Delta(C_i)$ such that $\sigma \in R(\sigma)$. That is, the randomized strategy profile $\sigma$ is a Nash equilibrium of the game $\Gamma$, and hence the Nash equilibrium of $\Gamma$ is guaranteed to exist. 

Next, we prove that when the Infinite Skeleton-data Game (ISG) $\Gamma_S = (E, (\boldsymbol{\uptheta_i})_{i \in N}, (u_i(\boldsymbol\uptheta))_{i \in N})$ admits a polynomial function of mutual information-based utility function $u_i$, and the space $\boldsymbol\Theta_i$ to which $\boldsymbol\uptheta_i$ belongs is compact and bounded, the equilibrium of $\Gamma_S$ necessarily exists. Here, $\phi(\boldsymbol\uptheta_i) = \sigma_i$, where $\phi$ is a continuous mapping.

Consider the strategic-form game $\Gamma = (N,(C_i)_{i \in N}, (v_i)_{i \in N})$, where the player set satisfies $N = E$, $\sigma_i = \phi(\boldsymbol\uptheta_i) \in \Delta(C_i)$ for all $i \in E(= N)$, and the payoff  functions $v_i$ are defined as $v_i(\sigma) = u_i(\boldsymbol\uptheta)$. Since each $\boldsymbol\Theta_i$ is compact and bounded, $\phi$ is continuous, and the construction in Eq.~(\ref{eq:a-3}) holds, it follows that $\times_{i \in N} \Delta(C_i)$ is nonempty. Therefore, $\times_{i \in N} \Delta(C_i)$ is nonempty, convex, bounded, and closed. Moreover, since each $u_i(\boldsymbol\uptheta)$ is a polynomial function of mutual information function and mutual information is continuous, the corresponding payoff function $v_i(\sigma)$ is also continuous. Hence, the game $\Gamma$ admits at least one Nash equilibrium $\sigma^*$. By letting $\sigma^* = \phi(\boldsymbol\uptheta^*)$, we immediately obtain an equilibrium $\boldsymbol\uptheta^*$ of the ISG $\Gamma_S$. Therefore, when the utility functions $u_i$ are mutual information-based and the parameter space $\boldsymbol\Theta_i$ is compact and bounded, the equilibrium of the ISG $\Gamma_S$ is guaranteed to exist.
\end{proof}

\begin{proof}[Proof of the Equivalence between DLEO and M\textsuperscript{3}ISGM]
\label{pr:th3}
The equivalence between DLEO and M\textsuperscript{3}ISGM is established as follows. Since the game between encoder 1 and encoder 2 is formulated as a mini-max ISG, according to Theorem~\ref{the:m2} (see Section~\ref{sec:isg}), it suffices to analyze the Nash equilibrium by examining the utility function $u_1$ of encoder 1. The Nash equilibrium $(\boldsymbol\uptheta_1, \boldsymbol\uptheta_2)$ satisfies:
\begin{equation}
\label{eq:a-8}
\begin{aligned}
u_1(\boldsymbol\uptheta_1, \boldsymbol\uptheta_2) =& \mathop{\max}_{\hat{\mathbf{z}}^{(i)}} \mathop{\min}_{\tilde{\mathbf{z}}^{(i)}} (-I(\hat{\mathbf{z}}^{(i)}; \bar{\mathbf{z}}^{(i)})\\ 
&- [I(\hat{\mathbf{z}}^{(i)}; \bar{\mathbf{z}}^{(i)}) - I(\tilde{\mathbf{z}}^{(i)}; \bar{\mathbf{z}}^{(i)})]^2),
\end{aligned}
\end{equation}
where, with respect to $\hat{\mathbf{z}}^{(i)}$, the objective function is an upward-opening quadratic function. Therefore, the function attains its minimum iff $I(\hat{\mathbf{z}}^{(i)}; \bar{\mathbf{z}}^{(i)}) = I(\tilde{\mathbf{z}}^{(i)}; \bar{\mathbf{z}}^{(i)})$, in which case the minimum value with respect to $\hat{\mathbf{z}}^{(i)}$ is $-I(\hat{\mathbf{z}}^{(i)}; \bar{\mathbf{z}}^{(i)})$. Furthermore, according to the definition of average mutual information, the value of $I(\hat{\mathbf{z}}^{(i)}; \bar{\mathbf{z}}^{(i)})$ is strictly negatively correlated with the distance $d(\hat{\mathbf{z}}^{(i)}; \bar{\mathbf{z}}^{(i)})$. That is, maximizing $-I(\hat{\mathbf{z}}^{(i)}; \bar{\mathbf{z}}^{(i)})$ is equivalent to maximizing the distance $d(\hat{\mathbf{z}}^{(i)}; \bar{\mathbf{z}}^{(i)})$.

\begin{figure}[t]
  \begin{center}
  \includegraphics[width=0.4\columnwidth]{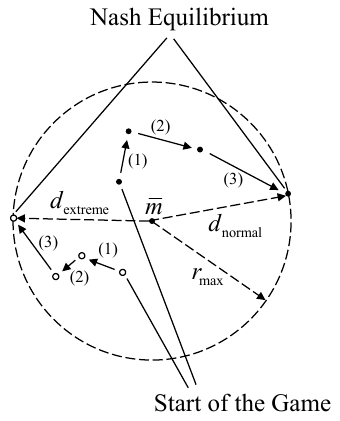}\\
  \caption{Illustration of the convergence process of the mini-max ISG toward equilibrium. }\label{fig:a-1}
  \end{center}
\end{figure}

Therefore, all Nash equilibria $(\boldsymbol\uptheta_1, \boldsymbol\uptheta_2)$ of the ISG $\Gamma_S$ must satisfy that $(\hat{\mathbf{z}}^{(i)}, \tilde{\mathbf{z}}^{(i)})$ fulfills:
\begin{equation}
\label{eq:a-9}
\hat{\mathbf{z}}^{(i)} \in \mathop{\arg\max}_{\mathbf{z} \in \mathcal{D}_{\hat{\mathbf{z}}}} d(\mathbf{z}, \bar{\mathbf{z}}^{(i)}), \quad{d(\hat{\mathbf{z}}^{(i)}; \bar{\mathbf{z}}^{(i)}) = d(\tilde{\mathbf{z}}^{(i)}; \bar{\mathbf{z}}^{(i)}), }
\end{equation}
where $\hat{\mathbf{z}}^{(i)} = f_{q\_1}(\hat{\mathbf{X}}^{(i)})$, $\tilde{\mathbf{z}}^{(i)} = f_{q\_2}(\tilde{\mathbf{X}}^{(i)})$, and $\mathcal{D}_{\hat{\mathbf{z}}}$ denotes the value domain of $\hat{\mathbf{z}}^{(i)}$. Accordingly, one Nash equilibrium of the proposed M\textsuperscript{3}ISGM can be obtained, as illustrated in Fig.~\ref{fig:a-1}.

In Fig.~\ref{fig:a-1}, a solid dot without annotation denotes the coordinate of a feature vector $\hat{\mathbf{z}}^{(i)}$, while a hollow dot denotes the coordinate of $\tilde{\mathbf{z}}^{(i)}$. During the game process, encoder 1 tends to push $\hat{\mathbf{z}}^{(i)}$ away from $\bar{\mathbf{z}}^{(i)}$ as far as possible in order to maximize the utility $u_1$. In contrast, encoder 2 attempts to adjust the distance between $\tilde{\mathbf{z}}^{(i)}$ and $\bar{\mathbf{z}}^{(i)}$ so that it approaches the distance between $\hat{\mathbf{z}}^{(i)}$ and $\bar{\mathbf{z}}^{(i)}$, thereby reducing $u_1$ while increasing $u_2$. After three iterations of the game, since $I(\hat{\mathbf{z}}^{(i)};\bar{\mathbf{z}}^{(i)})$ reaches its minimum and simultaneously satisfies
$I(\hat{\mathbf{z}}^{(i)};\bar{\mathbf{z}}^{(i)}) = I(\tilde{\mathbf{z}}^{(i)};\bar{\mathbf{z}}^{(i)})$,
the game $\Gamma_S$ converges to an equilibrium state.

\begin{figure}[t]
  \begin{center}
  \includegraphics[width=0.55\columnwidth]{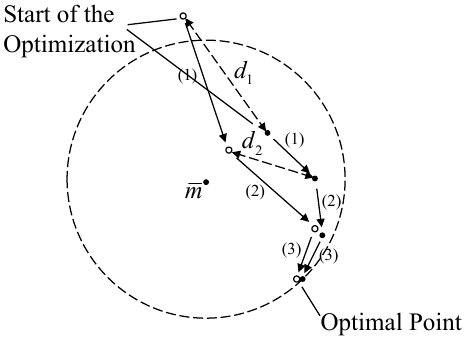}\\
  \caption{Illustration of the optimization process of DLEO. }\label{fig:a-2}
  \end{center}
\end{figure}

However, it can be observed that if the game is defined in this manner alone, it is difficult to guarantee that $\tilde{\mathbf{z}}^{(i)}$ is sufficiently close to $\hat{\mathbf{z}}^{(i)}$, and the game can only ensure that the degree to which $\hat{\mathbf{z}}^{(i)}$ moves away from $\bar{\mathbf{z}}^{(i)}$ is comparable to that of $\tilde{\mathbf{z}}^{(i)}$. Nevertheless, when $\hat{\mathbf{z}}^{(i)}$ and $\tilde{\mathbf{z}}^{(i)}$ are sufficiently close to each other and at a comparable distance from $\bar{\mathbf{z}}^{(i)}$, it is evident that this configuration also constitutes a Nash equilibrium of the game. This observation provides an important insight: if appropriate constraints can be deliberately introduced, the game can be guided to evolve toward a desired equilibrium. Such a mechanism is a concrete manifestation of the focal effect~\cite{rapoport2012game} in game theory.

Since both $\mathcal{L}_1$ and $\mathcal{L}_2$ are convex functions, the two optimization problems involved in DLEO are convex optimization problems, and therefore each admits a global optimum. For $\mathcal{L}_1$, according to the definitions of $\mathcal{L}_{\mathrm{Push}}$ and the KL divergence, its optimum satisfies:
\begin{equation}
\label{eq:a-10}
\hat{\mathbf{z}}^{(i)} \in \mathop{\arg\max}_{\mathbf{z} \in \mathcal{D}_{\hat{\mathbf{z}}}} d(\mathbf{z}, \bar{\mathbf{z}}^{(i)}), \quad{\tilde{\mathbf{z}}^{(i)} \in \mathop{\arg\min}_{\mathbf{z} \in \mathcal{D}_{\tilde{\mathbf{z}}}} d(\mathbf{z}, \bar{\mathbf{z}}^{(i)})},
\end{equation}
where $\mathcal{D}_{\tilde{\mathbf{z}}}$ denotes the domain of $\tilde{\mathbf{z}}^{(i)}$. Moreover, since the maximum rotation angle of the extremely augmented features is larger than that of the normally augmented features, it follows that $\mathop{\min}_{\mathbf{z} \in \mathcal{D}_{\tilde{\mathbf{z}}}} d(\mathbf{z}, \hat{\mathbf{z}}^{(i)}) = 0$, iff $z = \hat{\mathbf{z}}^{(i)}$. Consequently, the optimal solution is given by
\begin{equation}
\label{eq:a-11}
\hat{\mathbf{z}}^{(i)} \in \mathop{\arg\max}_{\mathbf{z} \in \mathcal{D}_{\hat{\mathbf{z}}}} d(\mathbf{z}, \bar{\mathbf{z}}^{(i)}), \quad{\tilde{\mathbf{z}}^{(i)} = \hat{\mathbf{z}}^{(i)}}.
\end{equation}
It can be observed that Eq.~(\ref{eq:a-11}) is a special case of Eq.~(\ref{eq:a-9}), and thus necessarily corresponds to one of the Nash equilibria in M\textsuperscript{3}ISGM. This implies that, in order to obtain a Nash equilibrium of M\textsuperscript{3}ISGM that satisfies the requirements of the proposed network, it suffices to solve the optimization problem in DLEO.

For intuitive illustration, Fig.~\ref{fig:a-2} is used to depict the optimization process. The meanings of all elements in the figure are the same as those in Fig.~\ref{fig:a-1}.
\end{proof}


%





\ifCLASSOPTIONcaptionsoff
  \newpage
\fi





\bibliographystyle{IEEEtran}
\bibliography{IEEEabrv,Bibliography}

@article{han2013enhanced,
  title={Enhanced computer vision with microsoft kinect sensor: A review},
  author={Han, Jungong and Shao, Ling and Xu, Dong and Shotton, Jamie},
  journal={IEEE transactions on cybernetics},
  volume={43},
  number={5},
  pages={1318--1334},
  year={2013},
  publisher={IEEE}
}

@article{zheng2023deep,
  title={Deep learning-based human pose estimation: A survey},
  author={Zheng, Ce and Wu, Wenhan and Chen, Chen and Yang, Taojiannan and Zhu, Sijie and Shen, Ju and Kehtarnavaz, Nasser and Shah, Mubarak},
  journal={ACM computing surveys},
  volume={56},
  number={1},
  pages={1--37},
  year={2023},
  publisher={ACM New York, NY}
}

@article{Su2019PredictCluster,
  author    = {K. Su and X. Liu and E. Shlizerman},
  title     = {PREDICT \& CLUSTER: Unsupervised Skeleton Based Action Recognition},
  journal   = {arXiv preprint arXiv:1911.12409},
  year      = {2019},
  doi       = {10.48550/ARXIV.1911.12409}
}

@inproceedings{Lin2020MS2L,
  author    = {L. Lin and S. Song and W. Yang and J. Liu},
  title     = {MS2L: Multi-Task Self-Supervised Learning for Skeleton Based Action Recognition},
  booktitle = {Proceedings of the 28th ACM International Conference on Multimedia},
  pages     = {2490--2498},
  year      = {2020},
  doi       = {10.1145/3394171.3413548}
}

@inproceedings{Zheng2018Unsupervised,
  author    = {N. Zheng and J. Wen and R. Liu and L. Long and J. Dai and Z. Gong},
  title     = {Unsupervised Representation Learning With Long-Term Dynamics for Skeleton Based Action Recognition},
  booktitle = {AAAI},
  volume    = {32},
  number    = {1},
  year      = {2018},
  doi       = {10.1609/aaai.v32i1.11853},
  pages     = {2644--2651}
}

@inproceedings{Thoker2021SkeletonContrastive,
  author    = {F. M. Thoker and H. Doughty and C. G. M. Snoek},
  title     = {Skeleton-Contrastive 3D Action Representation Learning},
  booktitle = {Proceedings of the 29th ACM International Conference on Multimedia},
  pages     = {1655--1663},
  year      = {2021},
  doi       = {10.1145/3474085.3475307}
}

@article{He2019MoCo,
  author    = {K. He and H. Fan and Y. Wu and S. Xie and R. Girshick},
  title     = {Momentum Contrast for Unsupervised Visual Representation Learning},
  journal   = {arXiv preprint arXiv:1911.05722},
  year      = {2019},
  doi       = {10.48550/ARXIV.1911.05722}
}

@inproceedings{Guo2022AimCLR,
  author    = {T. Guo and H. Liu and Z. Chen and M. Liu and T. Wang and R. Ding},
  title     = {Contrastive Learning from Extremely Augmented Skeleton Sequences for Self-Supervised Action Recognition},
  booktitle = {AAAI},
  volume    = {36},
  number    = {1},
  pages     = {762--770},
  year      = {2022},
  doi       = {10.1609/aaai.v36i1.19957}
}

@article{Lin2023ActionletDependent,
  author    = {L. Lin and J. Zhang and J. Liu},
  title     = {Actionlet-Dependent Contrastive Learning for Unsupervised Skeleton-Based Action Recognition},
  journal   = {arXiv preprint arXiv:2303.10904},
  year      = {2023},
  doi       = {10.48550/ARXIV.2303.10904}
}

@inproceedings{Gao2021ContrastiveSSL,
  title={Contrastive self-supervised learning for skeleton action recognition},
  author={Gao, Xuehao and Yang, Yang and Du, Shaoyi},
  booktitle={NeurIPS 2020 workshop on pre-registration in machine learning},
  pages={51--61},
  year={2021},
  organization={PMLR}
}

@article{Guo2024AimCLRPlus,
  title={Improving self-supervised action recognition from extremely augmented skeleton sequences},
  author={Guo, Tianyu and Liu, Mengyuan and Liu, Hong and Wang, Guoquan and Li, Wenhao},
  journal={Pattern Recognition},
  volume={150},
  pages={110333},
  year={2024},
}

@book{rapoport2012game,
  title={Game theory as a theory of conflict resolution},
  author={Rapoport, Anatol},
  volume={2},
  year={2012},
  publisher={Springer Science \& Business Media}
}

@article{Li2020ShapeMotion,
  title={Learning shape and motion representations for view invariant skeleton-based action recognition},
  author={Li, Yanshan and Xia, Rongjie and Liu, Xing},
  journal={Pattern Recognition},
  volume={103},
  pages={107293},
  year={2020}
}

@article{Rao2021MomentumLSTM,
  author    = {H. Rao and S. Xu and X. Hu and J. Cheng and B. Hu},
  title     = {Augmented Skeleton Based Contrastive Action Learning with Momentum LSTM for Unsupervised Action Recognition},
  journal   = {Information Sciences},
  volume    = {569},
  pages     = {90--109},
  year      = {2021},
  doi       = {10.1016/j.ins.2021.04.023}
}

@article{Li2021CrossViewConsistency,
  author    = {L. Li and M. Wang and B. Ni and H. Wang and J. Yang and W. Zhang},
  title     = {3D Human Action Representation Learning via Cross-View Consistency Pursuit},
  journal   = {arXiv preprint arXiv:2104.14466},
  year      = {2021},
  doi       = {10.48550/ARXIV.2104.14466}
}

@article{Liu2021AdaptiveMultiView,
  author    = {X. Liu and Y. Li and R. Xia},
  title     = {Adaptive Multi-View Graph Convolutional Networks for Skeleton-Based Action Recognition},
  journal   = {Neurocomputing},
  volume    = {444},
  pages     = {288--300},
  year      = {2021},
  doi       = {10.1016/j.neucom.2020.03.126}
}

@article{Chen2022MixedSkeleton,
  author    = {Z. Chen and H. Liu and T. Guo and Z. Chen and P. Song and H. Tang},
  title     = {Contrastive Learning from Spatio-Temporal Mixed Skeleton Sequences for Self-Supervised Skeleton-Based Action Recognition},
  journal   = {arXiv preprint arXiv:2207.03065},
  year      = {2022},
  doi       = {10.48550/ARXIV.2207.03065}
}

@article{Xia2022LAGANet,
  author    = {R. Xia and Y. Li and W. Luo},
  title     = {LAGA-Net: Local-and-Global Attention Network for Skeleton Based Action Recognition},
  journal   = {IEEE Transactions on Multimedia},
  volume    = {24},
  pages     = {2648--2661},
  year      = {2022},
  doi       = {10.1109/TMM.2021.3086758}
}

@inproceedings{Dong2023HiCLR,
  author    = {J. Dong and S. Sun and Z. Liu and S. Chen and B. Liu and X. Wang},
  title     = {Hierarchical Contrast for Unsupervised Skeleton-Based Action Representation Learning},
  booktitle = {AAAI},
  volume    = {37},
  number    = {1},
  pages     = {525--533},
  year      = {2023},
  doi       = {10.1609/aaai.v37i1.25127}
}

@inproceedings{Wu2024SCDNet,
  author    = {C. Wu and others},
  title     = {SCD-Net: Spatiotemporal Clues Disentanglement Network for Self-Supervised Skeleton-Based Action Recognition},
  booktitle = {AAAI},
  volume    = {38},
  number    = {6},
  pages     = {5949--5957},
  year      = {2024},
  doi       = {10.1609/aaai.v38i6.28409}
}

@article{Hua2023PartAwareCLR,
  author    = {Y. Hua and others},
  title     = {Part Aware Contrastive Learning for Self-Supervised Action Recognition},
  journal   = {arXiv preprint arXiv:2305.00666},
  year      = {2023},
  doi       = {10.48550/ARXIV.2305.00666}
}

@inproceedings{Wang2023SS3D,
  author    = {G. Wang and H. Liu and T. Guo and J. Guo and T. Wang and Y. Li},
  title     = {Self-Supervised 3D Skeleton Representation Learning with Active Sampling and Adaptive Relabeling for Action Recognition},
  booktitle = {IEEE International Conference on Image Processing (ICIP)},
  pages     = {56--60},
  year      = {2023},
  doi       = {10.1109/ICIP49359.2023.10221961}
}

@article{Li2022MDLBP,
  author    = {Y. Li and H. Tang and W. Xie and W. Luo},
  title     = {Multidimensional Local Binary Pattern for Hyperspectral Image Classification},
  journal   = {IEEE Transactions on Geoscience and Remote Sensing},
  volume    = {60},
  pages     = {1--13},
  year      = {2022},
  doi       = {10.1109/TGRS.2021.3069505}
}

@article{Mao2022CMD,
  author    = {Y. Mao and W. Zhou and Z. Lu and J. Deng and H. Li},
  title     = {CMD: Self-Supervised 3D Action Representation Learning with Cross-modal Mutual Distillation},
  journal   = {arXiv preprint arXiv:2208.12448},
  year      = {2022},
  doi       = {10.48550/ARXIV.2208.12448}
}

@article{Yu2024GeoExplainer,
  author    = {R. Yu and Y. Li and H. Liang and Z. Chen},
  title     = {GeoExplainer: Interpreting Graph Convolutional Networks with Geometric Masking},
  journal   = {Neurocomputing},
  volume    = {605},
  pages     = {128393},
  year      = {2024},
  doi       = {10.1016/j.neucom.2024.128393}
}

@article{Li2024BICAM,
  author    = {Y. Li and H. Liang and R. Yu},
  title     = {BI-CAM: Generating Explanations for Deep Neural Networks Using Bipolar Information},
  journal   = {IEEE Transactions on Multimedia},
  volume    = {26},
  pages     = {568--580},
  year      = {2024},
  doi       = {10.1109/TMM.2023.3267884}
}

@article{Li2024GTCAM,
  author    = {Y. Li and T. Shi and Z. Chen and L. Zhang and W. Xie},
  title     = {GT-CAM: Game Theory Based Class Activation Map for GCN},
  journal   = {IEEE Transactions on Pattern Analysis and Machine Intelligence},
  volume    = {46},
  number    = {12},
  pages     = {8806--8819},
  year      = {2024},
  doi       = {10.1109/TPAMI.2024.3413026}
}

@article{wang2022contrastive,
  title={Contrastive learning with stronger augmentations},
  author={Wang, Xiao and Qi, Guo-Jun},
  journal={IEEE transactions on pattern analysis and machine intelligence},
  volume={45},
  number={5},
  pages={5549--5560},
  year={2022},
  publisher={IEEE}
}

@inproceedings{Amir2016NTU60,
  author    = {S. Amir and J. Liu and T.-T. Ng and G. Wang},
  title     = {NTU RGB+D: A Large Scale Dataset for 3D Human Activity Analysis},
  booktitle = {Proceedings of the IEEE Conference on Computer Vision and Pattern Recognition (CVPR)},
  pages     = {1010--1019},
  year      = {2016}
}

@article{Liu2017PKUMMD,
  author    = {C. Liu and Y. Hu and Y. Li and S. Song and J. Liu},
  title     = {PKU-MMD: A Large Scale Benchmark for Continuous Multi-Modal Human Action Understanding},
  journal   = {arXiv preprint arXiv:1703.07475},
  year      = {2017},
  doi       = {10.48550/ARXIV.1703.07475}
}

@inproceedings{Xu2021CapsuleAE,
  author    = {Z. Xu and X. Shen and Y. Wong and M. S. Kankanhalli},
  title     = {Unsupervised Motion Representation Learning with Capsule Autoencoders},
  booktitle = {Advances in Neural Information Processing Systems (NeurIPS)},
  pages     = {3205--3217},
  year      = {2021},
}

@article{Chen2025VLSkeleton,
  author    = {Y. Chen and others},
  title     = {Vision-Language Meets the Skeleton: Progressively Distillation With Cross-Modal Knowledge for 3D Action Representation Learning},
  journal   = {IEEE Transactions on Multimedia},
  volume    = {27},
  pages     = {2293--2303},
  year      = {2025},
  doi       = {10.1109/TMM.2024.3521718}
}

@article{Gao2023EfficientSTCL,
  author    = {X. Gao and Y. Yang and Y. Zhang and M. Li and J.-G. Yu and S. Du},
  title     = {Efficient Spatio-Temporal Contrastive Learning for Skeleton-Based 3-D Action Recognition},
  journal   = {IEEE Transactions on Multimedia},
  volume    = {25},
  pages     = {405--417},
  year      = {2023},
  doi       = {10.1109/TMM.2021.3127040}
}

@article{liu2024cross,
  title={Cross-model cross-stream learning for self-supervised human action recognition},
  author={Liu, Mengyuan and Liu, Hong and Guo, Tianyu},
  journal={IEEE Transactions on Human-Machine Systems},
  pages={1--10},
  year={2024}
}

@article{tu2025issm,
  title={Informative Sample Selection Model for Skeleton-based Action Recognition with Limited Training Samples},
  author={Tu, Zhigang and Zhang, Zhengbo and Gong, Jia and Yuan, Junsong and Du, Bo},
  journal={arXiv preprint arXiv:2510.25345},
  year={2025}
}

@article{yang2024view,
  title        = {View-invariant Skeleton Action Representation Learning via Motion Retargeting},
  author       = {Yang, Dapeng and Wang, Yanan and Dantcheva, Antitza and Garattoni, Lorenzo and Francesca, Gianpiero and Br{\'e}mond, Fran{\c{c}}ois},
  journal      = {International Journal of Computer Vision},
  volume       = {132},
  pages        = {2351--2366},
  year         = {2024},
  doi          = {10.1007/s11263-023-01967-8}
}

@article{kakutani1941,
  title={A generalization of {B}rouwer's fixed point theorem},
  author={Kakutani, Shizuo},
  journal={Duke Mathematical Journal},
  volume={8},
  number={3},
  pages={457--459},
  year={1941}
}

@article{liu2019ntu120,
  title={Ntu rgb+ d 120: A large-scale benchmark for 3d human activity understanding},
  author={Liu, Jun and Shahroudy, Amir and Perez, Mauricio and Wang, Gang and Duan, Ling-Yu and Kot, Alex C},
  journal={IEEE transactions on pattern analysis and machine intelligence},
  volume={42},
  number={10},
  pages={2684--2701},
  year={2019},
  publisher={IEEE}
}

@inproceedings{yan2018spatial,
  title={Spatial temporal graph convolutional networks for skeleton-based action recognition},
  author={Yan, Sijie and Xiong, Yuanjun and Lin, Dahua},
  booktitle={Proceedings of the AAAI conference on artificial intelligence},
  volume={32},
  number={1},
  year={2018}
}

@inproceedings{zhang2023hierarchical,
  title={Hierarchical consistent contrastive learning for skeleton-based action recognition with growing augmentations},
  author={Zhang, Jiahang and Lin, Lilang and Liu, Jiaying},
  booktitle={Proceedings of the AAAI Conference on Artificial Intelligence},
  volume={37},
  number={3},
  pages={3427--3435},
  year={2023}
}
%


\section{Biography Section}
\begin{IEEEbiography}[{\includegraphics[width=1in,height=1.25in,clip,keepaspectratio]{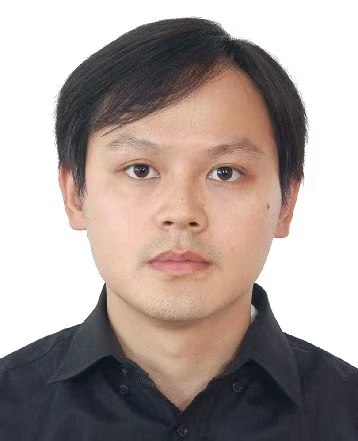}}]{Yanshan Li} received the Ph.D. degree in the South China University of Technology. He is currently a Researcher and Doctoral Supervisor with the Institute of Intelligent Information Processing and Guangdong Key Laboratory of
Intelligent Information Processing, Shenzhen University. His research interests include computer vision, machine learning, and image analysis.
\end{IEEEbiography}

\begin{IEEEbiography}[{\includegraphics[width=1in,height=1.25in,clip,keepaspectratio]{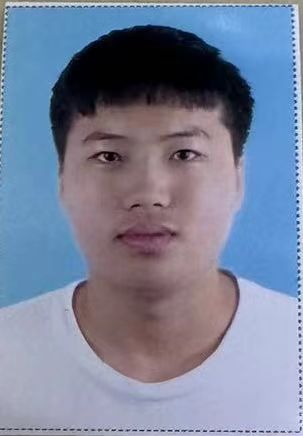}}]{Ke Ma} is currently working toward the master’s student with Shenzhen University. His research focus is on computer vision.
\end{IEEEbiography}

\begin{IEEEbiography}[{\includegraphics[width=1in,height=1.25in,clip,keepaspectratio]{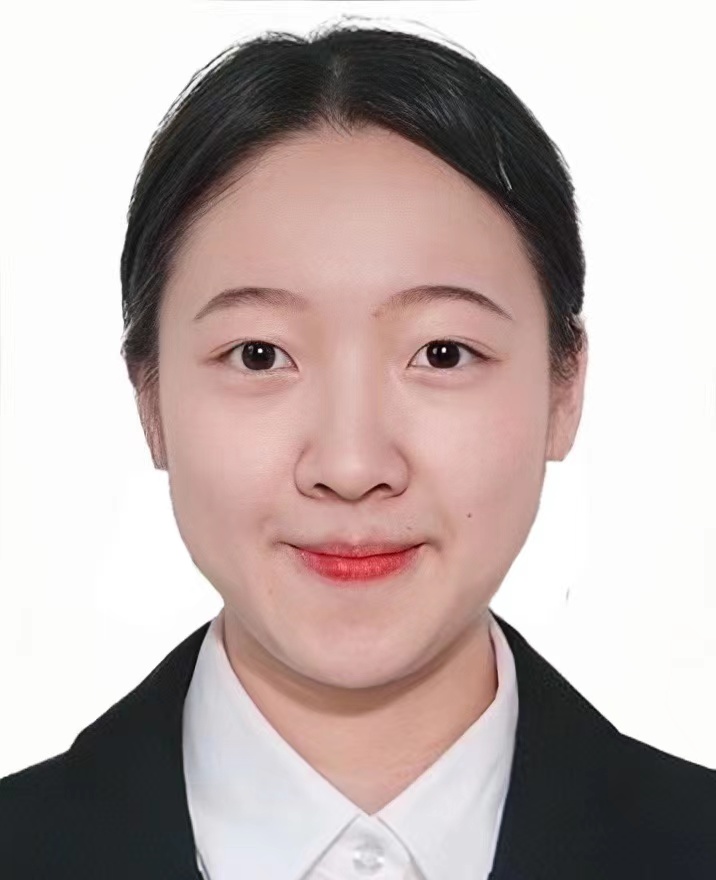}}]{Miaomiao Wei} received her master’s degree from Shenzhen University. Her research mainly focuses on human action recognition.
\end{IEEEbiography}

\begin{IEEEbiography}[{\includegraphics[width=1in,height=1.25in,clip,keepaspectratio]{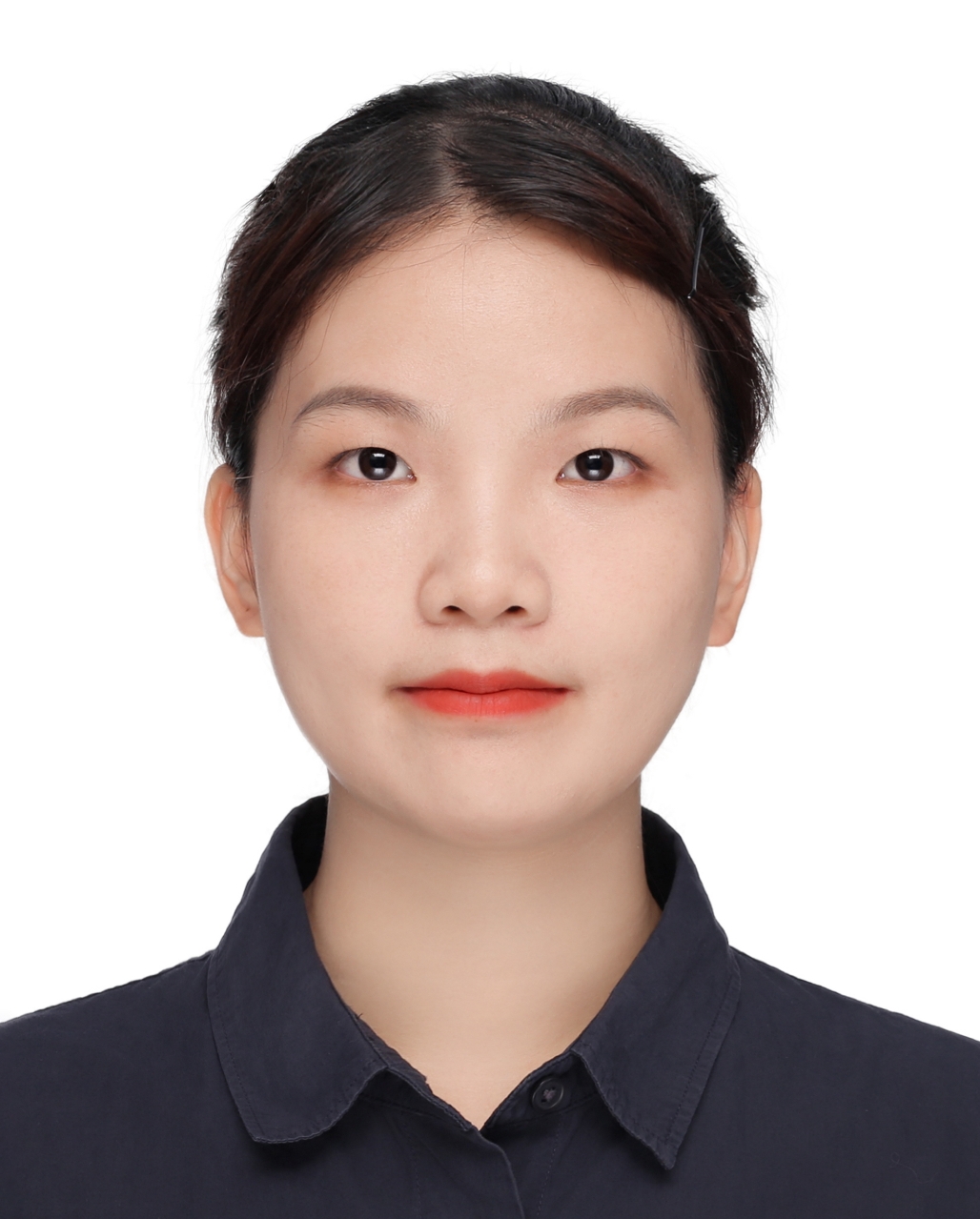}}]{Linhui Dai} (corresponding author) received the Ph.D. degree from the School of Computer Science, Peking University (PKU), China, in 2024, under the supervision of Prof. Hong Liu. She currently serves as an Assistant Professor at the college of Electronics and Information Engineering, Shenzhen University, Shenzhen, China. She has authored or coauthored in PR, T-CSVT, CAAI TRIT, and et al. Her research interests include open world object detection, underwater object detection, and salient object detection. (Email: dailinhui@szu.edu.cn)
\end{IEEEbiography}





\vfill


\end{document}